\DeclareMathOperator*{\KL}{KL}
\DeclareMathOperator*{\blkdiag}{blkdiag}
\newcommand{\brl}[1]{\left[#1\right]}
\newtheorem{proposition}{Proposition}
\theoremstyle{definition}
\newtheorem*{assumption*}{Assumption}
\newtheorem*{problem*}{Problem}
\newtheorem{problem}{Problem}
\theoremstyle{remark}
\newcommand{\bfb}{\mathbf{b}}
\newcommand{\bfe}{\mathbf{e}}
\newcommand{\bfp}{\mathbf{p}}
\newcommand{\bfr}{\mathbf{r}}
\newcommand{\bfs}{\mathbf{s}}
\newcommand{\bfu}{\mathbf{u}}
\newcommand{\bfv}{\mathbf{v}}
\newcommand{\bfw}{\mathbf{w}}
\newcommand{\bfx}{\mathbf{x}}
\newcommand{\bfy}{\mathbf{y}}
\newcommand{\bfz}{\mathbf{z}}
\newcommand{\bfmu}{\boldsymbol{\mu}}
\newcommand{\bbE}{\mathbb{E}}
\newcommand{\bbR}{\mathbb{R}}
\newcommand{\calE}{\mathcal{E}}
\newcommand{\calG}{\mathcal{G}}
\newcommand{\calL}{\mathcal{L}}
\newcommand{\calN}{\mathcal{N}}
\newcommand{\calQ}{\mathcal{Q}}
\newcommand{\calV}{\mathcal{V}}
\begin{document}

\title{Multi-Robot Object SLAM Using Distributed Variational Inference}
\author{Hanwen Cao, Sriram Shreedharan, Nikolay Atanasov
	\thanks{Manuscript received: May 1st, 2024; Revised: July 24th, 2024; Accepted: August 19th, 2024.}%
	\thanks{This paper was recommended for publication by Editor Lucia Pallottino upon evaluation of the Associate Editor and Reviewers’ comments.}%
	\thanks{We gratefully acknowledge support from NSF FRR CAREER 2045945 and ARL DCIST CRA W911NF-17-2-0181.}%
	\thanks{The authors are with the Department of Electrical and Computer Engineering, University of California San Diego, La Jolla, CA 92093, USA, e-mails: {\tt\small\{h1cao,\allowbreak sshreedharan,\allowbreak natanasov\}@ucsd.edu}.}%
}

\markboth{IEEE Robotics and Automation Letters. Preprint Version. Accepted August, 2024}
{Cao \MakeLowercase{\textit{et al.}}: Multi-Robot Object SLAM Using Distributed Variational Inference} 

\maketitle

\begin{abstract}
Multi-robot simultaneous localization and mapping (SLAM) enables a robot team to achieve coordinated tasks by relying on a common map of the environment. Constructing a map by centralized processing of the robot observations is undesirable because it creates a single point of failure and requires pre-existing infrastructure and significant communication throughput. This paper formulates multi-robot object SLAM as a variational inference problem over a communication graph subject to consensus constraints on the object estimates maintained by different robots. To solve the problem, we develop a distributed mirror descent algorithm with regularization enforcing consensus among the communicating robots. Using Gaussian distributions in the algorithm, we also derive a distributed multi-state constraint Kalman filter (MSCKF) for multi-robot object SLAM. Experiments on real and simulated data show that our method improves the trajectory and object estimates, compared to individual-robot SLAM, while achieving better scaling to large robot teams, compared to centralized multi-robot SLAM.
\end{abstract}
\begin{IEEEkeywords}
	Multi-Robot SLAM, Distributed Robot Systems, Probability and Statistical Methods.
\end{IEEEkeywords}

\section{INTRODUCTION}
\label{sec:introduction}



\IEEEPARstart{S}{imultaneous} localization and mapping (SLAM) \cite{slam_survey} is a fundamental problem for enabling mobile robot to operate autonomously in unknown unstructured environments. In robotics applications, such as transportation, warehouse automation, and environmental monitoring, a team of collaborating robots can be more efficient than a single robot. However, effective coordination in robot teams requires a common frame of reference and a common understanding of the environment \cite{sos-slam}. Traditionally, these requirements have been approached by relying on a central server or lead robot \cite{sris_slam,deutsch2016framework}, which communicates with other robots to receive sensor measurements and update the locations and map for the team. However, communication with a central server requires pre-existing infrastructure, introduces delays or potential estimation inconsistency, e.g., if the server loses track of synchronous data streams, and creates a single point of failure in the robot team. Hence, developing distributed techniques for multi-robot SLAM is an important and active research direction. A fully decentralized SLAM system enables robots to communicate opportunistically with connected peers in an ad-hoc network, removing the need for multi-hop communication protocols and centralized computation infrastructure. It allows flexible addition or removal of robots in the team and, by extending the scalability of the algorithm, enables coverage of larger areas with improved localization and map accuracy. 

\begin{figure}[t]
    \centering
    \includegraphics[width=\linewidth]{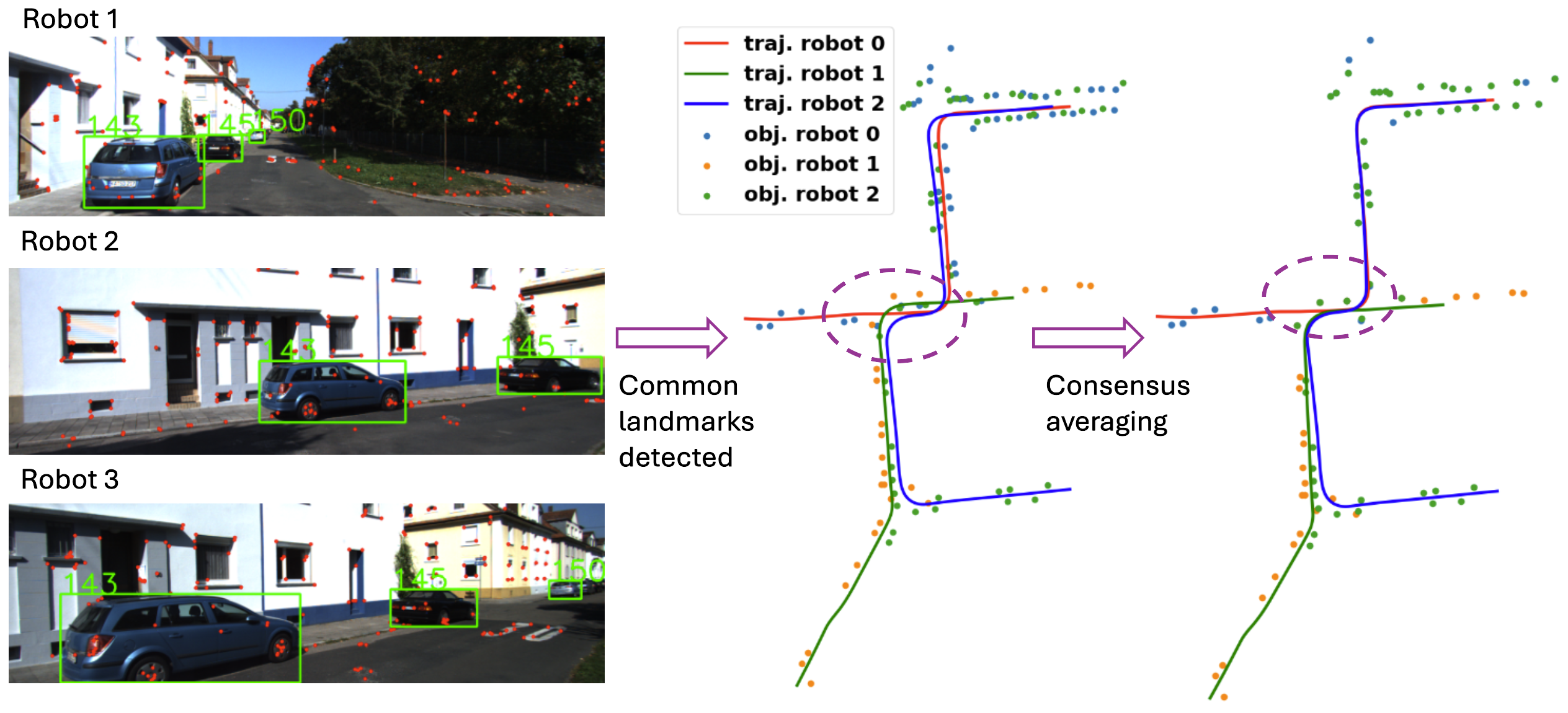}
    \caption{Illustration of multi-robot object SLAM via distributed multi-state constraint Kalman filtering. The left images are inputs for the robots, where the \textcolor{red}{red} are geometric features (extracted by FAST~\cite{fast_feature}) and the \textcolor{green}{green} are object detections (by YOLOv6~\cite{yolov6}). The geometric features and object bounding box centroids are used as observations. When common objects are observed by communicating robots, a consensus averaging step is performed to align the estimated robot trajectories and object positions.}
    \label{fig:teaser}
    \vspace*{-1ex}
\end{figure}

This paper considers a multi-robot landmark-based SLAM problem. We develop an approach for distributed Bayesian inference over a graph by formulating a mirror descent algorithm \cite{mirror_descent} in the space of probability density functions and introducing a regularization term that couples the estimates of neighboring nodes. Our formulation allows joint optimization of common variables (e.g., common landmarks among the robots) and local optimization of others (e.g., private robot trajectories). As a result, each node keeps a distribution only over its variables of interest, enabling both efficient storage and communication. By using Gaussian distributions in the mirror descent algorithm, we derive a distributed version of the widely used multi-state constraint Kalman filter (MSCKF) \cite{msckf} with an additional averaging step to enforce consensus for the common variables. We apply our distributed MSCKF algorithm to collaborative object SLAM using only stereo camera observations at each robot. In the prediction step, each robot estimates its trajectory locally using visual odometry. In the update step, the robots correct their trajectory estimates using both visual features and object detections. As common in the MSCKF, we avoid keeping visual landmarks in the state using a null-space projection step. However, object landmarks are kept as a map representation for each robot and are shared among the robots during the consensus averaging step to collaboratively estimate consistent object maps. In short, each robot estimates its trajectory and an object map locally but communicates with its neighbors to reach agreement on the object maps across the robots. Our contributions are summarized as follows.


\begin{itemize}
    \item We formulate multi-robot landmark SLAM as a variational inference problem over a communication graph with a consensus constraint on the landmark variables.

    \item We develop a distributed mirror descent algorithm with a regularization term that couples the marginal densities of neighboring nodes.

    \item Using mirror descent with Gaussian distributions, we obtain a distributed version of the MSCKF algorithm.

    
    \item We demonstrate multi-robot object SLAM using stereo camera measurements for odometry and object detection on both real and simulated data to show that our method improves the overall accuracy of the trajectories and object maps of robot teams, compared to individual-robot SLAM, while achieving better scaling to large robot teams, compared to centralized multi-robot SLAM\footnote{Code is at \url{https://github.com/ExistentialRobotics/distributed_msckf}.}.
\end{itemize}

\section{RELATED WORK}
\label{sec:related_work}

SLAM is a broad research area including a variety of estimation methods~\cite{msckf,isam,isam2}
as discussed in \cite{slam_survey, huang_vio_survey, vio_benchmark, deepVSLAM, vslam_survey}. Performing SLAM with multiple collaborating robots improves efficiency but also introduces challenges related to distributed storage, computation, and communication. This section reviews recent progress in multi-robot SLAM.


%

\subsection{Multi-robot factor graph optimization} 

Factor graph methods formulate SLAM as an optimization problem over a bipartite graph of variables to be estimated and factors relating variables and measurements via error functions. Tian et al.~\cite{tian2021distributed} propose a certifiably correct pose graph optimization (PGO) method with a novel Riemannian block coordinate descent (RBCD) that operates in a distributed setting. Cunningham et al. \cite{ddf_sam,ddf_sam2} extend smoothing and mapping (SAM) \cite{srsam} by introducing a constrained factor graph that enforces consistent estimates of common landmarks among robots. Choudhary et al. \cite{dpgo} developed a two-stage approach using successive over-relaxation and Jacobi over-relaxation to split the computation among the robots. MR-iSAM2 \cite{mrisam2} extends incremental smoothing and mapping (iSAM2) \cite{isam2} by introducing a novel data structure called mult-root Bayes tree. Tian et al. \cite{tian2023spectral} investigate the relation between Hessians of Riemannian optimization and Laplacians of weighted graphs and design a communication-efficient multi-robot optimization algorithm performing approximate second-order optimization.

Recent SLAM systems utilize the theoretical results of the above works to achieve efficient multi-robot operation. Kimera-Multi~\cite{kimeramulti}, a fully distributed dense metric-semantic SLAM system, uses a two-stage optimization method built upon graduated non-convexity \cite{gnc} and RBCD \cite{tian2021distributed}. DOOR-SLAM \cite{door_slam} uses \cite{dpgo} as a back-end and pairwise consistency maximization \cite{pcm} for identifying consistent measurements across robots. Xu et al. \cite{xu2022d2slam} develop a distributed visual-inertial SLAM combining collaborative visual-inertial odometry with an alternating direction method of multipliers (ADMM) algorithm and asynchronous distributed pose graph optimization \cite{arock}. Andersson et al. \cite{sris_slam} design a multi-robot SLAM system built upon square-root SAM \cite{srsam} by utilizing rendezvous-measurements.

\subsection{Multi-robot filtering}


Filtering methods, such as the Kalman filter, offer a computationally lightweight alternative to factor graph optimization by performing incremental prediction and update steps that avoid a large number of iterations. Roumeliotis and Bekey \cite{multirobot_localization} showed that the Kalman filter equations can be written in decentralized form, allowing decomposition into smaller communicating filters at each robot. Thrun et. al \cite{multirobot_eif_slam} presented a sparse extended information filter for multi-robot SLAM, which actively removes information to ensure sparseness at the cost of approximation. With nonlinear motion and observation models, a decentralized extended Kalman filter (EKF) has an observable subspace of higher dimension than the actual nonlinear system and generates unjustified covariance reduction \cite{oc_ekf}. Huang et al. introduced observability constraints in EKF \cite{oc_ekf} and unscented smoothing \cite{clatt} algorithms to ensure consistent estimation. Gao et al. \cite{phd_slam} use random finite sets to represent landmarks at each robot and maintain a probability hypothesis density (PHD). The authors prove that geometric averaging of the robot PHDs over one-hop neighbors leads to convergence of the PHDs to a global Kullback-Leibler average, ensuring consistent maps across the robots.
Zhu et al.~\cite{covariance_intersection} propose a distributed visual-inertial cooperative localization algorithm by leveraging covariance intersection to compensate for unknown correlations among the robots and deal with loop-closure constraints.

Our contribution is to derive a fully distributed filter for object SLAM from a constrained variational inference perspective. Our formulation makes a novel connection to distributed mirror descent and enables robots to achieve landmark consensus efficiently with one-hop communication only and without sharing private trajectory information.

\section{PROBLEM STATEMENT}
\label{sec:problem_statement}

Consider $n$ robots seeking to collaboratively construct a model of their environment represented by a variable $\bfy$, e.g., a vector of landmark positions. Each robot $i$ also aims to estimate its own state $\bfx_{i,t}$, e.g., pose, at time $t$. The combined state of robot $i$ is denoted as $\bfs_{i,t} = [\bfx_{i,t}^\top \ \bfy^\top]^\top$ and evolves according to a known Markov motion model:
\begin{equation} \label{eq:motion_model}
  \bfs_{i,t+1} \sim f_i( \cdot \mid \bfs_{i,t}, \bfu_{i,t}),
\end{equation}
where $\bfu_{i,t}$ is a control input and $f_i$ is the probability density function (PDF) of the next state $\bfs_{i,t+1}$. Each robot receives observations $\bfz_{i,t}$ according to a known observation model:
\begin{equation} \label{eq:observation_model}
  \bfz_{i,t} \sim h_i( \cdot \mid \bfs_{i,t}),
\end{equation}
where $h_i$ is the observation PDF.

The robots communicate over a network represented as a connected undirected graph $\calG = (\calV, \calE)$ with nodes $\calV = \{1, \ldots, n\}$ corresponding to the robots and edges $\calE \subseteq \calV \times \calV$ specifying robot pairs that can exchange information, e.g. $(i,j) \in \calE$ indicates that robot $i$ and $j$ can exchange information. Let $A \in \bbR^{n \times n}$ be a doubly stochastic weighted adjacency matrix of $\calG$ such that $A_{ij} > 0$ if $(i,j) \in \calE$ and $A_{ij} = 0$ otherwise. Also, let $\calN_i := \{ j \in \calV | (j,i) \in \calE\} \cup \{i\}$ denote the set of \emph{one-hop neighbors} of robot $i$. \
and includes node $i$ itself. 
We consider the following problem.

\begin{problem}\label{problem}
Given control inputs $\bfu_i = [\bfu_{i,0}^\top,\ldots,\bfu_{i,T-1}^\top]^\top$ and observations $\bfz_{i} := [\bfz_{i,0}^\top,\ldots,\bfz_{i,T}^\top]^\top$, each robot $i$ aims to estimate the robot states $\bfs_i := [\bfs_{i,0}^\top,\ldots,\bfs_{i,T}^\top]^\top$ collaboratively by exchanging information only with one-hop neighbors $j \in \calN_i$ in the communication graph $\calG$.
\end{problem}



\section{Distributed Variational Inference}
\label{sec:dvi}

We approach the collaborative estimation problem using variational inference. We develop a distributed mirror descent algorithm to estimate a (variational) density of the states $\bfs_{i,t}$ with regularization that enforces consensus on the estimates of the common landmarks $\bfy$ among the robots.

\subsection{Variational inference}



As shown in \cite{barfoot2020exactly}, a Kalman filter/smoother can be derived by minimizing the Kullback-Leibler (KL) divergence between a variational density $q_i(\bfs_i)$ and the true Bayesian posterior $p_i(\bfs_i | \bfu_i, \bfz_i)$. Adopting a Bayesian perspective, the posterior is proportional to the joint density, which factorizes into products of motion and observation likelihoods due to the Markov assumptions in the models \eqref{eq:motion_model}, \eqref{eq:observation_model}:
\begin{align}
p_i(&\bfs_i | \bfu_{i}, \bfz_{i}) \propto  p_i(\bfs_i, \bfu_{i}, \bfz_{i}) \label{eq:likelyhood_decomposition} \\
&\propto p_i(\bfs_{i,0}) \prod_{t=0}^{T-1} f_i(\bfs_{i,t+1} | \bfs_{i,t}, \bfu_{i,t}) \prod_{t = 0}^{T} h_i(\bfz_{i,t} | \bfs_{i,t}). \notag
\end{align}
%
The KL divergence between the variational density $q_i(\bfs_i)$ and the true posterior $p_i(\bfs_i | \bfu_i, \bfz_i)$ can be decomposed as:
\begin{equation}
\begin{aligned}
\KL(q_i || p_i) &= \bbE_{q_i} [- \log p_i(\bfs_i, \bfu_i, \bfz_i)]\\
& - \underbrace{\bbE_{q_i}[-\log q_i(\bfs_i)]}_{\text{entropy}} + \underbrace{\log p_i(\bfu_i, \bfz_i)}_{\text{constant}},
\end{aligned}
\end{equation}
where for simplicity of notation $q_i$ without input arguments refers to $q_i(\bfs_i)$. Dropping the constant term, leads to the following optimization problem at robot $i$:
\begin{equation}\label{eq:original_objective}
\min_{q_i\in \calQ_i} c_i(q_i) := \bbE_{q_i} [- \log p_i(\bfs_i, \bfz_i, \bfu_i) + \log q_i(\bfs_i)],
\end{equation}
where $\calQ_i$ is a family of admissible variational densities.

\subsection{Distributed mirror descent}

We solve the variational inference problem in \eqref{eq:original_objective} using the mirror descent algorithm \cite{mirror_descent}. We use mirror descent because it includes an explicit (Bregman divergence) regularization term in the objective function that can incorporate information from one-hop neighbors \cite{paritosh2022distributed}. This allows us to formulate a distributed version of mirror descent that enforces agreement among the landmark estimates of different robots with convergence guarantees. 
Mirror descent is a generalization of projected gradient descent that performs projection using a generalized distance (Bregman divergence), instead of the usual Euclidean distance, to respect the geometry of the constraint set $\calQ_i$. Since $\calQ_i$ is a space of PDFs, a suitable choice of Bregman divergence is the KL divergence. Starting with a prior PDF $q_i^{(0)}(\bfs_i)$, the mirror descent algorithm performs the following iterations:
\begin{equation} \label{eq:single_Mirror_Descent}
q_i^{(k+1)} \in \arg\min_{q_i \in \calQ_i} \bbE_{q_i}\brl{ \frac{\delta c_i}{\delta q_i} (q_i^{(k)})} + \frac{1}{\alpha_{k}} \KL(q_i || q_i^{(k)}),
\end{equation}
where $\delta c_i / \delta q_i(q_i^{(k)})$ is the Fr{\'e}chet derivative of $c_i(q_i)$ with respect to $q_i$ evaluated at $q_i^{(k)}$ and $\alpha_{k} > 0$ is the step size.

Note that the optimizations \eqref{eq:single_Mirror_Descent} at each robot $i$ are completely decoupled and, hence, each robot would be estimating its own density over the common landmarks $\bfy$. To make the estimation process collaborative, the regularization term $\KL(q_i || q_i^{(k)})$ in \eqref{eq:single_Mirror_Descent} should require that the PDF $q_i$ of robot $i$ is also similar to the priors $q_j^{(k)}$ of its neighbors $\calN_i$ rather than its own prior $q_i^{(k)}$ alone. 
In our case, the PDFs $q_j^{(k)}(\bfs_j)=q_j^{(k)}(\bfx_j,\bfy)$ are not defined over the same set of variables since each robot $j$ is estimating its own private state $\bfx_j$ as well. Inspired by but different from~\cite{paritosh2022distributed}, to enforce consensus only on the common state $\bfy$, the KL divergence term in \eqref{eq:single_Mirror_Descent} can be decomposed as a sum of marginal and conditional terms:
\begin{align}
\KL(q_i(\bfx_i,\bfy) || q_i^{(k)}(\bfx_i,\bfy)) &= \KL(q_i(\bfy) || q_i^{(k)}(\bfy)) \\
&\;+ \KL(q_i(\bfx_i|\bfy) || q_i^{(k)}(\bfx_i|\bfy)). \notag
\end{align}
Hence, we can regularize only the marginal density $q_i(\bfy)$ of the common environment state $\bfy$ to remain similar to the marginal densities $q_j^{(k)}(\bfy)$ of the one-hop neighbors by using a weighted sum of KL divergences. This leads to the following optimization problem at robot $i$:
\begin{align}
q_i^{(k+1)}& \in \arg\min_{q_i \in \calQ_i} g_i(q_i) \label{eq:distributed_MD}\\
g_i(q_i)& := \bbE_{q_i}\brl{ \frac{\delta c_i}{\delta q_i} (q_i^{(k)})} + \frac{1}{\alpha^{(k)}} \KL(q_i(\bfx_i|\bfy) || q_i^{(k)}(\bfx_i|\bfy)) \notag\\
&\quad + \frac{1}{\alpha^{(k)}} \sum_{j \in \calN_i} A_{ij} \KL(q_i(\bfy) || q_j^{(k)}(\bfy)),\notag
\end{align}
where $\calQ_i = \{q_i \mid \int q_i = 1 \}$ is the feasible set and $A_{ij}$ are the elements of the adjacency matrix with $\sum_{j \in \calN_i} A_{ij} = 1$. We derive a closed-form expression for the optimizer in the following proposition.

\begin{proposition}\label{prop:DM_solution}
    The optimizers of \eqref{eq:distributed_MD} satisfy:
    \begin{equation}\label{eq:DM_solution}
    \begin{aligned}
	q^{(k+1)}_{i} (\bfx_i, \bfy) &\propto [p_i (\bfx_i, \bfy, \bfz_i, \bfu_i) / q_i^{(k)}(\bfx_i, \bfy)]^{\alpha_{k}} \\
	&\quad \ \quad q_i^{(k)}(\bfx_i | \bfy) \prod_{j \in \calN_{i}} [ q^{(k)}_j (\bfy) ]^{A_{ij}}.
    \end{aligned}
    \end{equation}
\end{proposition}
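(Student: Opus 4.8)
The plan is to minimize the convex functional $g_i$ over the probability simplex $\calQ_i$ in closed form by recognizing its value, up to an additive constant, as a single KL divergence between $q_i$ and an unnormalized target density; the minimizer of a KL divergence over normalized PDFs is then the normalized target, which is exactly \eqref{eq:DM_solution}. First I would compute the Fréchet derivative of the variational objective $c_i$ in \eqref{eq:original_objective}. Writing $c_i(q_i) = \int q_i(\bfs_i)\brl{-\log p_i(\bfs_i,\bfz_i,\bfu_i) + \log q_i(\bfs_i)}\,d\bfs_i$ and differentiating gives $\tfrac{\delta c_i}{\delta q_i}(q_i^{(k)}) = -\log p_i(\bfs_i,\bfz_i,\bfu_i) + \log q_i^{(k)}(\bfs_i) + 1$, where the trailing constant may be dropped since it contributes a fixed amount under $\int q_i = 1$. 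Substituting this into the first term of $g_i$ renders that term linear in $q_i$.

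Next I would expand the two regularizers as expectations with respect to the joint density $q_i(\bfx_i,\bfy)$, using the factorization $q_i(\bfx_i,\bfy) = q_i(\bfx_i\mid\bfy)\,q_i(\bfy)$ and the chain-rule reading of the conditional divergence, namely $\KL(q_i(\bfx_i\mid\bfy)\,\|\,q_i^{(k)}(\bfx_i\mid\bfy)) = \bbE_{q_i}\brl{\log q_i(\bfx_i\mid\bfy) - \log q_i^{(k)}(\bfx_i\mid\bfy)}$, together with $\KL(q_i(\bfy)\,\|\,q_j^{(k)}(\bfy)) = \bbE_{q_i}\brl{\log q_i(\bfy) - \log q_j^{(k)}(\bfy)}$. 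The crux of the argument—and the reason one may regularize only the marginal over $\bfy$—is that after writing $\log q_i(\bfx_i\mid\bfy) = \log q_i(\bfx_i,\bfy) - \log q_i(\bfy)$, the term $-\bbE_{q_i}\brl{\log q_i(\bfy)}$ produced by the conditional regularizer is exactly cancelled by the term $\prl{\sum_{j\in\calN_i}A_{ij}}\bbE_{q_i}\brl{\log q_i(\bfy)}$ coming from the weighted marginal regularizer, precisely because $\sum_{j\in\calN_i}A_{ij}=1$. What survives is $\bbE_{q_i}\brl{\log q_i(\bfx_i,\bfy)}$ plus terms linear in $q_i$.

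Collecting everything, I would arrange $\alpha_k\,g_i(q_i)$ into the form $\bbE_{q_i}\brl{\log q_i(\bfx_i,\bfy) - \log r_i(\bfx_i,\bfy)}$, where the (unnormalized) target is
\[
r_i(\bfx_i,\bfy) = \brl{p_i(\bfx_i,\bfy,\bfz_i,\bfu_i)/q_i^{(k)}(\bfx_i,\bfy)}^{\alpha_k}\,q_i^{(k)}(\bfx_i\mid\bfy)\prod_{j\in\calN_i}\brl{q_j^{(k)}(\bfy)}^{A_{ij}}.
\]
This equals $\KL(q_i\,\|\,\bar r_i)$ up to the additive constant $-\log\int r_i$, where $\bar r_i$ denotes the normalization of $r_i$. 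Since the KL divergence is nonnegative and vanishes only when its arguments coincide, and since $\bar r_i\in\calQ_i$, the unique minimizer is $q_i^{(k+1)} = \bar r_i \propto r_i$, which is exactly \eqref{eq:DM_solution}. Equivalently, the same stationarity condition follows from introducing a Lagrange multiplier for the constraint $\int q_i = 1$ and setting the Fréchet derivative of $g_i$ to zero, which yields $\log q_i = \log r_i + \text{const}$.

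I expect the main obstacle to be the careful bookkeeping in the second step—correctly expanding the conditional divergence via the chain rule and exhibiting the cancellation of the marginal entropy $\bbE_{q_i}\brl{\log q_i(\bfy)}$—since this cancellation is precisely what decouples consensus on $\bfy$ from the private conditional $q_i(\bfx_i\mid\bfy)$. A secondary technical point is to assume that $r_i$ is integrable so that $\bar r_i$ is well defined; this is implicit in the statement $q_i^{(k+1)}\propto r_i$.
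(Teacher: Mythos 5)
Your proof is correct, and the core bookkeeping matches the paper's: you identify $\tfrac{\delta c_i}{\delta q_i}(q_i^{(k)}) = -\log p_i + \log q_i^{(k)} + 1$, expand the conditional and marginal KL terms, and exploit $\sum_{j\in\calN_i}A_{ij}=1$ so that the $\bbE_{q_i}[\log q_i(\bfy)]$ contributions cancel and only the joint entropy $\bbE_{q_i}[\log q_i(\bfx_i,\bfy)]$ survives. Where you diverge is the final optimality step: the paper forms the Lagrangian with a multiplier for $\int q_i = 1$, sets the functional variation to zero, and solves for $q_i$ (the same cancellation of $\pm(1+\log q_i(\bfy))$ happens there, just inside the derivative rather than in the objective), which strictly speaking only identifies a stationary point. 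Your primary route instead rewrites $\alpha_k g_i$, up to an additive constant, as $\KL(q_i\,\|\,\bar r_i)$ for the normalized target $\bar r_i \propto r_i$ and invokes nonnegativity and the equality case of the KL divergence; this buys global optimality and uniqueness of the minimizer over $\calQ_i$ for free, at the price of having to note (as you do) that $r_i$ must be integrable for $\bar r_i$ to exist. You also observe the two arguments coincide at the level of the stationarity condition $\log q_i = \log r_i + \text{const}$, so the proofs are interchangeable; yours is marginally stronger as stated.
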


\begin{proof}
 See Appendix~\ref{sec:proof_DM_solution}.
\end{proof}

\subsection{Linear Gaussian case}

In this section, we consider linear Gaussian models and obtain an explicit form of the distributed variational inference update in \eqref{eq:DM_solution}. Suppose each robot $i$ has the following motion and observation models:
\begin{equation}
    \begin{aligned}
    \bfs_{i,t+1} &= F_i \bfs_{i,t} + G_i \bfu_{i,t} + \bfw_{i,t}, \ &\bfw_{i,t}& \sim \calN(\mathbf{0}, W_i), \\
    \bfz_{i,t} &= H_i \bfs_{i,t} + \bfv_{i,t}, \ &\bfv_{i,t}& \sim \calN(\mathbf{0}, V_i).
    \end{aligned}
\end{equation}
Let the prior density of $\bfs_{i,0}$ be $\calN(\boldsymbol{\mu}_{i,0}, \Sigma_{i,0})$ and considering all timesteps, we can write the models in lifted form as
\begin{equation}\label{eq:lifted_models}
\begin{aligned}
    \bfs_i &= \bar{F}_i (\bar{G}_i \bfu_i + \bar{\bfw}_i), \ \Bar{\bfw}_i \sim \calN(\mathbf{0}, \bar{W}_i), \\
    \bfz_i &= \Bar{H}_i \bfs_i + \Bar{\bfv}_i, \ \bfv_i \sim \calN(\mathbf{0}, \Bar{V}_i),
\end{aligned}
\end{equation}
with the lifted terms defined as below
\begin{align}
    & \bfs_i = [\bfs_{i,0}^\top \ \cdots \ \bfs^\top_{i,T}]^\top, \;\; \bfu^\top_i = [\boldsymbol{\mu}_{i,0}^\top \ \bfu_{i,0} \ \cdots \ \bfu^\top_{i,T}]^\top,\notag\\
    & \bfz_i = [\bfz_{i,0}^\top \ \cdots \ \bfz^\top_{i,T}]^\top,\;\; \Bar{H}_i = I_{T+1} \otimes H_i, \notag \\
    & \Bar{V}_i = I_{T+1} \otimes V_i, \Bar{W}_i = 
    \begin{bmatrix}
        \Sigma_{i, 0} & 0 \\
        0                   & I_{T} \otimes W_i
    \end{bmatrix},  \notag \\
    &\Bar{F}_i = 
    \begin{bmatrix}
        I       & 0     &\cdots &0 \\
        F       & I     &\cdots &0 \\
        \vdots  &\vdots &\ddots &\vdots \\
        F^T     &F^{T-1}&\cdots & I        
    \end{bmatrix}, 
    \Bar{G}_i = 
    \begin{bmatrix}
    I   & 0 \\
    0   & I_{T} \otimes G_i
    \end{bmatrix},
\end{align}
%
%
where $\otimes$ is the Kronecker product. Denoting the density and distribution at iteration $k$ as $q_i^{(k)}(\bfs_i)$ and $\calN(\boldsymbol{\mu}_{i,(k)}, \Sigma_{i,(k)})$, the distributed variational inference update in \eqref{eq:DM_solution} is computed in the following propositions.

\begin{proposition}
    \label{prop:construct_joint}
    Consider a joint Gaussian distribution
    \begin{equation}
        \begin{bmatrix}
            \bfx \\ \bfy
        \end{bmatrix} \sim
        \calN \left( 
        \begin{bmatrix}
         \boldsymbol{\mu}^{\bfx} \\ \boldsymbol{\mu}^{\bfy}   
        \end{bmatrix},
        \begin{bmatrix}
         \Sigma^{\bfx} & \Sigma^{\bfx\bfy} \\
         \Sigma^{\bfx\bfy\top} & \Sigma^{\bfy}
        \end{bmatrix} \right).
    \end{equation}
    If the marginal distribution over $\bfy$ changes to $\calN(\bar{\boldsymbol{\mu}}^{\bfy}, \bar{\Sigma}^{\bfy})$, the new joint distribution $\calN(\bar{\boldsymbol{\mu}}, \bar{\Sigma})$ of $(\bfx,\bfy)$ will be
    \begin{align}
        &\bar{\boldsymbol{\mu}} = 
	\begin{bmatrix}
	A \bar{\boldsymbol{\mu}}^{\bfy} + \bfb \\
	\bar{\boldsymbol{\mu}}^{\bfy}
	\end{bmatrix},
	\ \bar{\Sigma} = 
	\begin{bmatrix}
	A \bar{\Sigma}^{\bfy} A^\top + P & A \bar{\Sigma}^{\bfy} \\
	\bar{\Sigma}^{\bfy} A^\top & \bar{\Sigma}^{\bfy}
	\end{bmatrix}, \notag \\
	&A = \Sigma^{\bfx \bfy} \Sigma^{\bfy -1}, \quad\;\; \bfb = \boldsymbol{\mu}^{\bfx} - \Sigma^{\bfx \bfy} \Sigma^{\bfy -1} \boldsymbol{\mu}^{\bfy}, \notag \\ 
	&P = \Sigma^{\bfx} - \Sigma^{\bfx} \Sigma^{\bfy -1} \Sigma^{\bfx \bfy \top}.
     \end{align}
\end{proposition}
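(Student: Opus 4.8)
The plan is to exploit the factorization of the joint Gaussian as $q(\bfx,\bfy)=q(\bfx\mid\bfy)\,q(\bfy)$ and the fact that overwriting only the $\bfy$-marginal leaves the conditional $q(\bfx\mid\bfy)$ untouched. First I would write out the standard Gaussian conditional of $\bfx$ given $\bfy$ obtained from the original joint, namely $\bfx\mid\bfy\sim\calN\big(\boldsymbol{\mu}^{\bfx}+\Sigma^{\bfx\bfy}\Sigma^{\bfy-1}(\bfy-\boldsymbol{\mu}^{\bfy}),\,\Sigma^{\bfx}-\Sigma^{\bfx\bfy}\Sigma^{\bfy-1}\Sigma^{\bfx\bfy\top}\big)$, and then read off the affine coefficients $A=\Sigma^{\bfx\bfy}\Sigma^{\bfy-1}$ and $\bfb=\boldsymbol{\mu}^{\bfx}-A\boldsymbol{\mu}^{\bfy}$ together with the conditional covariance $P$. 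These are exactly the claimed $A$, $\bfb$, and $P$; I would flag that the displayed $P$ appears to carry a typographical slip, since the middle factor $\Sigma^{\bfx}$ should read $\Sigma^{\bfx\bfy}$, so that $P$ is the Schur complement $\Sigma^{\bfx}-\Sigma^{\bfx\bfy}\Sigma^{\bfy-1}\Sigma^{\bfx\bfy\top}$.

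The key modeling observation is that the phrase ``the marginal over $\bfy$ changes to $\calN(\bar{\boldsymbol{\mu}}^{\bfy},\bar{\Sigma}^{\bfy})$'' is to be interpreted as keeping $q(\bfx\mid\bfy)$ fixed and only swapping $q(\bfy)$. Granting this, I would represent the new joint through the equivalent linear-Gaussian generative form $\bfx=A\bfy+\bfb+\boldsymbol{\epsilon}$, where $\boldsymbol{\epsilon}\sim\calN(\mathbf{0},P)$ is independent of $\bfy$ and now $\bfy\sim\calN(\bar{\boldsymbol{\mu}}^{\bfy},\bar{\Sigma}^{\bfy})$. Because affine images of Gaussians are Gaussian, the new joint is automatically Gaussian and the whole computation reduces to propagating its first two moments.

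Next I would compute those moments. By the law of total expectation, $\bbE[\bfx]=A\bar{\boldsymbol{\mu}}^{\bfy}+\bfb$ while $\bbE[\bfy]=\bar{\boldsymbol{\mu}}^{\bfy}$, which yields the stated $\bar{\boldsymbol{\mu}}$. By the law of total covariance, equivalently directly from $\bfx=A\bfy+\bfb+\boldsymbol{\epsilon}$ with $\boldsymbol{\epsilon}\perp\bfy$, I get $\operatorname{Cov}(\bfy)=\bar{\Sigma}^{\bfy}$; the cross term $\operatorname{Cov}(\bfx,\bfy)=A\,\operatorname{Cov}(\bfy)=A\bar{\Sigma}^{\bfy}$ since $\boldsymbol{\epsilon}$ contributes nothing; and $\operatorname{Cov}(\bfx)=A\bar{\Sigma}^{\bfy}A^\top+P$, where the first summand is the covariance of the conditional mean $A\bfy+\bfb$ and $P$ is the constant conditional covariance averaged over $\bfy$. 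Assembling these blocks reproduces precisely the claimed $\bar{\Sigma}$.

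The hard part will not be analytical, as every step is a routine application of Gaussian conditioning and moment propagation. The single point deserving care is conceptual: justifying that overwriting the $\bfy$-marginal while preserving the $\bfx\mid\bfy$ conditional is the intended and well-posed operation, so that the new joint is uniquely determined by the pair $\big(q(\bfx\mid\bfy),\,\bar q(\bfy)\big)$. Once that interpretation is fixed, closure of the Gaussian family under this construction is immediate, and matching the block entries completes the proof.
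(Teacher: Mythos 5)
Your proof is correct, and it is worth noting that the paper never gives Proposition~2 a dedicated proof: the claim is only verified implicitly inside the proof of Proposition~3 (the ``we can verify that'' step around \eqref{eq:new_joint}), where the authors work entirely in information form. There, the operation ``replace the $\bfy$-marginal'' is encoded as $\bar{\Omega} = \Sigma^{-1} - L^\top \Sigma^{\bfy\,-1} L + L^\top \bar{\Sigma}^{\bfy\,-1} L$ (divide out the old marginal, multiply in the new one, with $L=[0\;I]$ embedding the $\bfy$-block), and the stated block form of $\bar{\Sigma}$ is obtained by inverting $\bar{\Omega}$. You instead work in moment form: factor $q(\bfx,\bfy)=q(\bfx\mid\bfy)\,q(\bfy)$, keep the linear-Gaussian conditional $\bfx = A\bfy+\bfb+\bfepsilon$ fixed, swap the marginal, and read off the blocks from the laws of total expectation and covariance. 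The two routes prove the same thing; yours is more elementary and makes the well-posedness of the operation (the new joint is uniquely determined by the pair of conditional and new marginal) explicit, while the paper's information-form version has the advantage of matching how the consensus averaging \eqref{eq:MD_averaging} is actually computed, so the verification doubles as the algorithmic derivation. Your reading of the proposition's intent is exactly the paper's, since the term $q_i^{(k)}(\bfx_i\mid\bfy)\prod_{j}[q_j^{(k)}(\bfy)]^{A_{ij}}$ in \eqref{eq:DM_solution} is precisely ``old conditional times new marginal.'' Your flag on $P$ is also right: as printed, $P=\Sigma^{\bfx}-\Sigma^{\bfx}\Sigma^{\bfy\,-1}\Sigma^{\bfx\bfy\,\top}$ is not even dimensionally consistent in general, and the middle factor should be $\Sigma^{\bfx\bfy}$ so that $P$ is the Schur complement $\Sigma^{\bfx}-\Sigma^{\bfx\bfy}\Sigma^{\bfy\,-1}\Sigma^{\bfx\bfy\,\top}$; the same slip recurs in the appendix where the expression is restated with subscripts.
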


\begin{proposition}
    \label{prop:VI_Gaussian_sol}
    In the linear Gaussian case, the distributed variational inference update in \eqref{eq:DM_solution} can be obtained by first averaging marginal densities of the common state $\bfy$ across the neighbors $\calN_i$ of robot $i$:
    %
    \begin{equation}
        \bar{\Sigma}^{\bfy -1}_{i,(k)} = \sum_{j \in \mathcal{N}_i} A_{ij} \Sigma^{\bfy -1}_{j,(k)}, \ 
        \bar{\Sigma}^{\bfy -1}_{i,(k)} \bar{\boldsymbol{\mu}}^{\bfy}_{i,(k)} = \sum_{j \in \mathcal{N}_i} A_{ij} \Sigma^{\bfy -1}_{j,(k)} \boldsymbol{\mu}^{\bfy}_{j,(k)},
         \label{eq:MD_averaging}
     \end{equation}
     then constructing a new joint distribution $\calN(\bar{\boldsymbol{\mu}}_{i,(k)}, \bar{\Sigma}_{i,(k)})$ according to Proposition~\ref{prop:construct_joint}, and finally updating the density using the motion and observation models in \eqref{eq:lifted_models}:
     \begin{align}
        &\Sigma^{-1}_{i,(k+1)} \!= \bar{\Sigma}^{-1}_{i,(k)} \!+ \alpha_{k} (\Bar{F}^{-\top}_i \Bar{W}_i^{-1} \Bar{F}^{-1}_i \!+ \Bar{H}^\top_i \Bar{V}_i^{-1} \Bar{H}_i -\! \Sigma^{-1}_{i,(k)}), \notag \\
        &\Sigma^{-1}_{i,(k+1)} \boldsymbol{\mu}_{i,(k+1)} = \bar{\Sigma}^{-1}_{i,(k)}\bar{\boldsymbol{\mu}}_{i,(k)} \notag \\
        &\ \ + \alpha_{k} (\Bar{F}_i^{-\top} \Bar{W}_i^{-1} \Bar{G}_i \bfu_i + \Bar{H}_i^\top \Bar{V}_i^{-1} \bfz_i - \Sigma^{-1}_{i,(k)}\boldsymbol{\mu}_{i,(k)}). \label{eq:VI_Gaussian_update}
    \end{align}
\end{proposition}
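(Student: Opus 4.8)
The plan is to specialize the general optimizer \eqref{eq:DM_solution} of Proposition~\ref{prop:DM_solution} to the linear Gaussian setting by working entirely in the information (natural-parameter) form, representing a Gaussian $\calN(\boldsymbol{\mu},\Sigma)$ by its precision $\Sigma^{-1}$ and information vector $\Sigma^{-1}\boldsymbol{\mu}$. In this representation, pointwise products of Gaussian densities add precisions and information vectors, and raising a Gaussian density to a power $\beta$ scales both by $\beta$. Since every factor on the right-hand side of \eqref{eq:DM_solution} is Gaussian, $q_i^{(k+1)}$ is Gaussian, and it suffices to accumulate the precision and information-vector contributions of three factor groups: (i) the likelihood power $[p_i(\bfx_i,\bfy,\bfz_i,\bfu_i)]^{\alpha_k}$, (ii) the negative power of the old joint $[q_i^{(k)}(\bfx_i,\bfy)]^{-\alpha_k}$, and (iii) the remaining factor $q_i^{(k)}(\bfx_i\mid\bfy)\prod_{j\in\calN_i}[q_j^{(k)}(\bfy)]^{A_{ij}}$.

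First I would compute the information form of the likelihood from the lifted models \eqref{eq:lifted_models}. Inverting the lifted dynamics gives $\bar{F}_i^{-1}\bfs_i-\bar{G}_i\bfu_i=\bar{\bfw}_i\sim\calN(\mathbf{0},\bar{W}_i)$, so the prior/motion factor of $p_i$, viewed as a function of $\bfs_i=(\bfx_i,\bfy)$, is a Gaussian potential with precision $\bar{F}_i^{-\top}\bar{W}_i^{-1}\bar{F}_i^{-1}$ and information vector $\bar{F}_i^{-\top}\bar{W}_i^{-1}\bar{G}_i\bfu_i$; likewise, the observation factor $\calN(\bfz_i;\bar{H}_i\bfs_i,\bar{V}_i)$ contributes precision $\bar{H}_i^\top\bar{V}_i^{-1}\bar{H}_i$ and information vector $\bar{H}_i^\top\bar{V}_i^{-1}\bfz_i$. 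These are exactly the bracketed quantities scaled by $\alpha_k$ in \eqref{eq:VI_Gaussian_update}, while factor group (ii) contributes $-\alpha_k\Sigma^{-1}_{i,(k)}$ and $-\alpha_k\Sigma^{-1}_{i,(k)}\boldsymbol{\mu}_{i,(k)}$.

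The remaining task is to identify factor group (iii) with the joint $\calN(\bar{\boldsymbol{\mu}}_{i,(k)},\bar{\Sigma}_{i,(k)})$ produced by the first two steps of the proposition. The product $\prod_{j\in\calN_i}[q_j^{(k)}(\bfy)]^{A_{ij}}$ is a weighted geometric mean of the neighbors' $\bfy$-marginals; by the power/product rule above it is Gaussian with precision $\sum_{j\in\calN_i}A_{ij}\Sigma^{\bfy-1}_{j,(k)}$ and information vector $\sum_{j\in\calN_i}A_{ij}\Sigma^{\bfy-1}_{j,(k)}\boldsymbol{\mu}^{\bfy}_{j,(k)}$, which is precisely the averaged marginal $\calN(\bar{\boldsymbol{\mu}}^{\bfy}_{i,(k)},\bar{\Sigma}^{\bfy}_{i,(k)})$ defined in \eqref{eq:MD_averaging}. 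Multiplying this new $\bfy$-marginal by the old conditional $q_i^{(k)}(\bfx_i\mid\bfy)$, whose linear-Gaussian parameters are read off from the blocks of $(\boldsymbol{\mu}_{i,(k)},\Sigma_{i,(k)})$, is exactly the operation of Proposition~\ref{prop:construct_joint}; hence group (iii) equals $\calN(\bar{\boldsymbol{\mu}}_{i,(k)},\bar{\Sigma}_{i,(k)})$ and contributes $\bar{\Sigma}^{-1}_{i,(k)}$ and $\bar{\Sigma}^{-1}_{i,(k)}\bar{\boldsymbol{\mu}}_{i,(k)}$. Summing the contributions of groups (i)--(iii) then yields \eqref{eq:VI_Gaussian_update}.

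I expect the main obstacle to be the careful bookkeeping in the first step: correctly forming the information representation of the lifted likelihood, in particular handling the $\bar{F}_i^{-1}$ factors from the inverted dynamics and verifying that the prior block $\Sigma_{i,0}$ folded into $\bar{W}_i$ produces the stated information vector $\bar{F}_i^{-\top}\bar{W}_i^{-1}\bar{G}_i\bfu_i$, together with tracking which factors carry the $\alpha_k$ exponent and which do not when the three groups are combined. The identification of group (iii) with Proposition~\ref{prop:construct_joint} is then essentially a direct appeal to that proposition once the geometric-mean marginal in \eqref{eq:MD_averaging} is recognized.
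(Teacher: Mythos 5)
Your proposal is correct, but it takes a genuinely different route from the paper. The paper proves Proposition~\ref{prop:VI_Gaussian_sol} by parameterizing $q_i$ as $\calN(\boldsymbol{\mu}_i,\Sigma_i)$, writing out the objective $g_i$ of \eqref{eq:distributed_MD} explicitly, computing $\partial g_i/\partial\boldsymbol{\mu}_i$ and $\partial g_i/\partial\Sigma_i$ with the help of Stein's lemma and the identity relating $\partial^2 g_i/\partial\boldsymbol{\mu}_i^\top\partial\boldsymbol{\mu}_i$ to $\partial g_i/\partial\Sigma_i$, setting these to zero, and only then recognizing the resulting information-form quantities as the averaged marginal plus reconstructed joint of Proposition~\ref{prop:construct_joint}. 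You instead take the already-proved functional optimizer \eqref{eq:DM_solution} as a black box and specialize it using closure of the Gaussian family under pointwise products and powers in information form, reading off the precision and information-vector contributions of each factor. Both arguments are sound: since every factor in \eqref{eq:DM_solution} is a Gaussian potential, the unconstrained optimizer is itself Gaussian, so it coincides with the stationary point the paper finds within the Gaussian parameterization. Your route is shorter and makes the algebraic structure of \eqref{eq:VI_Gaussian_update} transparent (indeed, the paper appends exactly this factor-by-factor interpretation as a remark after its proof); the paper's derivative-matching route is heavier but is reused essentially verbatim to derive the correlated-noise Kalman filter in its Appendix, where the likelihood no longer factorizes as cleanly. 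The one point to state explicitly if you write this up is that the product of the old conditional $q_i^{(k)}(\bfx_i\mid\bfy)$ with the averaged marginal is only \emph{proportional} to the joint $\calN(\bar{\boldsymbol{\mu}}_{i,(k)},\bar{\Sigma}_{i,(k)})$ (the geometric mean of Gaussians carries a nonunit normalizer), which is harmless because \eqref{eq:DM_solution} is itself stated up to proportionality.
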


\begin{proof}
     See Appendix~\ref{sec:proof_VI_solution}.
\end{proof}


In the above proposition, the averaging over the marginal densities in \eqref{eq:MD_averaging} comes from the term $\prod_{j \in \calN_{i}} [ q^{(k)}_j (\bfy) ]^{A_{ij}}$ in \eqref{eq:DM_solution}, which enforces consensus among the robots over the common variables $\bfy$. If the robots are in consensus, i.e., $\boldsymbol{\mu}^{\bfy}_{i,(k)}=\boldsymbol{\mu}^{\bfy}_{j,(k)}, \Sigma^{\bfy}_{i,(k)}=\Sigma^{\bfy}_{j,(k)}, \ \forall j \in \calN_i$, \eqref{eq:VI_Gaussian_update} with $\alpha_k=1$ converges in just one step, 
\begin{equation}
    \begin{aligned}
    \Sigma^{-1}_{i} &= \Bar{F}^{-\top}_i \Bar{W}_i^{-1} \Bar{F}^{-1}_i + \Bar{H}^\top_i \Bar{V}_i^{-1} \Bar{H}_i, \\
    \Sigma^{-1}_{i} \boldsymbol{\mu}_{i} &=  
    \Bar{F}_i^{-\top} \Bar{W}_i^{-1} \Bar{G}_i \bfu_i + \Bar{H}_i^\top \Bar{V}_i^{-1} \bfz_i.
    \end{aligned}
    \label{eq:regular_KF_info_space}
\end{equation}
As shown in \cite[Ch.~3.3]{state_barfoot}, considering only two consecutive time steps in the lifted form in \eqref{eq:lifted_models} leads to a Kalman filter. Using the result in Proposition \ref{prop:VI_Gaussian_sol}, we obtain a distributed Kalman filter that incorporates the consensus averaging step in \eqref{eq:MD_averaging}. To allow correlation between the motion and measurement noise, we follow Crassidis and Junkins \cite[Ch.~5]{crassidis2004optimal} and obtain a correlated Kalman filter in Appendix~\ref{sec:proof_correlated_KF}.



\section{Distributed MSCKF}

In this section, we use Proposition \ref{prop:VI_Gaussian_sol} to derive a distributed version of the MSCKF algorithm \cite{msckf}, summarized in Algorithm~\ref{alg:distributed_KF}. Each step of the algorithm is described in the following subsections.

\begin{algorithm}
\caption{Distributed Multi-State Constraint Kalman Filter}
\label{alg:distributed_KF}
\begin{algorithmic}
    \State \textbf{Input:} Prior mean and covariance $(\boldsymbol{\mu}_{i,t-1}, \Sigma_{i,t-1})$, control input $\mathbf{u}_{i,t-1}$, and measurements $\mathbf{z}^{g}_{i,t}$, $\mathbf{z}^o_{i,t}$.
    \State \textbf{Output:} Posterior mean and covariance $(\boldsymbol{\mu}_{i,t}, \Sigma_{i,t})$

    \State 1: \textbf{Consensus averaging}: \eqref{eq:averaging_KF}, \eqref{eq:averaging_pose} in Sec.~\ref{sec:consensus_averaging}
    
    \State 2: \textbf{State propagation}: \eqref{eq:state_propagation_mean}, \eqref{eq:state_propagation_cov} in Sec.~\ref{sec:state_propagation}

    \State 3: \textbf{State update}: \eqref{eq:Kalman_matrix}, \eqref{eq:update_step} in Sec.~\ref{sec:msckf_ekf_update}
    
    \State 4: \textbf{Feature initialization}: \eqref{eq:landm_init} in Sec.~\ref{sec:feature_init}
    
    
\end{algorithmic}
\end{algorithm}

\vspace{-0.5cm}

\subsection{State and observation description}

The state $\bfs_{i,t}$ of robot $i$ at time $t$ contains a sequence of $c$ historical camera poses $\bfx_{i,t}$ and a set of $m_t$ landmarks $\bfy_{i,t}$:
\begin{equation}
	\begin{aligned}
	\bfs_{i,t} &= (\bfx_{i,t}, \bfy_{i,t} ), \\
	\bfx_{i,t} &= (T_{i,t-c+1}, \dots, T_{i,t}), \;\; T_{i,k} \in SE(3), \ \forall k, \\
	\bfy_{i,t} &= [\bfp^\top_{i,1} \ \dots \ \bfp^\top_{i,m_t}]^\top\!, \;\;\;\; \bfp_{i,k} \in \bbR^3, \ \forall k. 
	\end{aligned}
\end{equation}
Besides the joint mean of the historical camera poses $\bfx_{i,t}$ and landmarks $\bfy_{i,t}$, each robot also keeps track a joint covariance $\Sigma_{i,t} \in \bbR^{(6c+3m_t) \times (6c+3m_t)}$.
Each robot obtains observations $\bfz^o_{i,t}$ of persistent features, e.g., object detections, and observations $\bfz^g_{i,t}$ of opportunistic features, e.g., image keypoints or visual features, as illustrated in Fig.~\ref{fig:teaser}.
We use point observations and the pinhole camera model for both feature types. Only the landmarks associated with persistent features are initialized and stored in the state while the landmarks associated with opportunistic features are used for structureless updates as in the MSCKF algorithm \cite{msckf}.

\subsection{Consensus averaging}
\label{sec:consensus_averaging}
Each robot $i$ communicates with its neighbors $\calN_i$ to find out common landmarks. Then each robot sends the mean and covariance of the common landmarks $\boldsymbol{\mu}^{\bfy}_{i,t-1}, \Sigma^{\bfy}_{i,t-1}$ to its neighbors and receives $\boldsymbol{\mu}^{\bfy}_{j,t-1}, \Sigma^{\bfy}_{j,t-1}$, $j \in \calN_{i} \backslash \{i\}$. The consensus averaging step is carried out by averaging the marginal distributions of the common landmarks:
\begin{equation}
\begin{aligned}
    \bar{\Sigma}^{\bfy -1}_{i,t-1} &= \sum_{j \in \mathcal{N}_i} A_{ij} \Sigma^{\bfy -1}_{j,t-1}, \\
    \bar{\Sigma}^{\bfy -1}_{i,t-1} \bar{\boldsymbol{\mu}}^{\bfy}_{i,t-1} &= \sum_{j \in \mathcal{N}_i} A_{ij} \Sigma^{\bfy -1}_{j,t-1} \boldsymbol{\mu}^{\bfy}_{j,t-1},
\end{aligned}
\label{eq:averaging_KF}
\end{equation}
which is the same as \eqref{eq:MD_averaging} except that we only consider one time step $t$ here. Then, we need to reconstruct the new joint distribution $\calN(\bar{\boldsymbol{\mu}}_{i,t-1}, \bar{\Sigma}_{i,t-1})$. Since we store the historical camera poses as $SE(3)$ matrices, Proposition~\ref{prop:construct_joint} can not be applied directly. Following \cite[Ch.~7.3.1]{state_barfoot}, we define a Gaussian distribution over a historical camera pose $\underline{T}_{i,k},\ k=t-c, \cdots, t-1$ by adding a perturbation $\epsilon_{i,k}$:
\begin{equation}
    \underline{T}_{i,k} = T_{i,k} \exp(\epsilon_{i,k}^{\wedge}),\ \epsilon_{i,k} \sim \calN(\mathbf{0}_6, \Sigma^{\bfx_k}_{i,t-1}),
\end{equation}
where $(\cdot)^{\wedge}$ defined in \cite[Ch.~7.1.2]{state_barfoot} converts from $\bbR^{6}$ to a $\bbR^{4 \times 4}$ twist matrix. The estimated covariance $\Sigma_{i,t-1}$ already takes account of both the poses and landmarks. For consistency of notation with Proposition~\ref{prop:construct_joint}, we denote the mean of the pose perturbation as $\boldsymbol{\mu}^{\bfx}_{i,t-1} = \mathbf{0}_{6c}$. After averaging and reconstructing the new joint distribution, $\bar{\boldsymbol{\mu}}^{\bfx}_{i,t-1}$ may be non-zero, so we need to correct the camera poses as follows:
\begin{equation}
    \bar{T}_{i,k} \!=\! T_{i,k} \! \exp(\bar{\boldsymbol{\mu}}^{\bfx_k \wedge}_{i,t-1}), \ \bar{\boldsymbol{\mu}}^{\bfx_k}_{i,t-1} \! \in \! \bbR^{6}, k\!=\!t-c,\! \cdots\!,\! t-1.
    \label{eq:averaging_pose}
\end{equation}

\subsection{State propagation}
\label{sec:state_propagation}

We derive a general odometry propagation step for the MSCKF algorithm, thus not necessarily requiring IMU measurements and enabling vision-only propagation. We assume an odometry algorithm (e.g., libviso2~\cite{libviso2}) provides relative pose measurements $\delta T_{i,t-1}$ between the frame at time $t-1$ and that at time $t$. The state of robot $i$ is propagated as:
\begin{equation}
\begin{aligned}
\bfx^+_{i,t} &= ( \bar{T}_{i,t-c+1}, \ldots, \bar{T}_{i,t-1}, \bar{T}_{i,t-1} \delta T_{i,t-1} ), \\
\bfs^+_{i,t} &= ( \bfx^+_{i,t}, \bar{\boldsymbol{\mu}}^{\bfy}_{i,t-1} ),
\end{aligned}
\label{eq:state_propagation_mean}
\end{equation}
where the terms $\bar{(\cdot)}$ are obtained from the consensus averaging step. The state covariance is propagated as follows:
\begin{align}
\Sigma^+_{i,t} \! &= \!\!
\begin{bmatrix}
A & 0 \\
J_{t}    & 0 \\
0 & I_{3m_{t-1}}
\end{bmatrix} \!\! \bar{\Sigma}_{i,t-1} \!\!
\begin{bmatrix}
A & 0 \\
J_{t}    & 0 \\
0 & I_{3m_{t-1}}
\end{bmatrix}^\top \!\!\!\!\!\! + \operatorname{diag}(\bfe_{6n}) \otimes W_i, \notag \\
A  &= \!\!  \left[0_{6(c-1) \times 6} | I_{6(c-1)} \right]\!, J_{t} \!\! = \!\! \left[ 0_{6 \times 6(c-1)} \ \! Ad(\delta T_{t-1}^{-1}) \right]\!,\!\! \label{eq:state_propagation_cov}
\end{align}
where $Ad(\cdot)$ is the adjoint of an $SE(3)$ matrix~\cite[Chapter~7.1.4]{state_barfoot}, $\bfe_{6n} \in \bbR^{6n+3m_t}$ is a vector with the $6n$-th element as $1$ and the rest as $0$, and $W_i \in \bbR^{6\times6}$ is the odometry measurement covariance.




\subsection{State update}
\label{sec:msckf_ekf_update}
The MSCKF update step follows prior work \cite{smsckf}. The camera pose residual is a perturbation $\underline{\epsilon}_{i,k}$ that transforms the estimated pose $T_{i,k}$ to the true pose $\underline{T}_{i,k}$, i.e. $\underline{T}_{i,k} = T_{i,k} \exp(\underline{\epsilon}_{i,k}^{\wedge})$. The landmark residual is the difference between true position $\underline{\bfp}$ and the estimated position $\bfp$, i.e., $\tilde{\bfp} = \underline{\bfp} - \bfp$, where $\bfp$ is the position mean if the landmark is in the state or the result of triangulation if it is not. When robot $i$ receives the $k$-th geometric feature observation at time $t$, denoted as $\bfz^g_{i,t,k}$, we linearize the observation model around the current error state $\tilde{\bfs}_{i,t,k}$ (composed of both pose and landmark residuals) and the feature position residual $\tilde{\bfp}^{g}_{i,k}$:
\begin{equation*}
\bfr^g_{i,t,k} = \bfz^g_{i,t,k} - \hat{\bfz}^g_{i,t,k} = H^{\bfs, g}_{i,t,k} \tilde{\bfs}_{i,t,k} + H^{\bfp, g}_{i,t,k} \tilde{\bfp}^{g}_{i,k} + \bfv^{g}_{i,t,k},
\end{equation*}
where $\hat{\bfz}^g_{i,t,k}$ is the predicted observation, $H^{\bfs, g}_{i,t,k}$ and $H^{\bfp, g}_{i,t,k}$ are Jacobians, and $\bfv^g_{i,t,k} \sim \calN(\mathbf{0}, V^g_i)$ is the geometric observation noise. Then, we left-multiply by the nullspace $N_{i,t,k}$ of $H^{\bfp,g}_{i,t,k}$ to remove the effect of $\tilde{\bfp}^{g}_{i,k}$:
\begin{equation}
\bfr^{g,0}_{i,t,k} = N_{i,t,k}^\top \bfr^g_{i,t,k} = N_{i,t,k}^\top H^{\bfs,g}_{i,t,k} \tilde{\bfs}_{i,t,k} + N_{i,t,k}^\top \bfv^{g}_{i,t,k}.
\end{equation}
Since we allow general odometry in the propagation step, potentially obtained from visual features, there may be correlation between the motion noise and the observation noise.
The correlation is denoted as
\begin{equation}
    S_{i,t,k} = \bbE [\bfw_{i,t-1} \bfv_{i,t,k}^\top], \;\; \bfw_{i,t-1} \sim \calN(\mathbf{0}, W_i).
    \label{eq:correlation_def}
\end{equation}
Concatenating $\bfr^{g,0}_{i,t,k}$, $N_{i,t,k}^\top H^{\bfs,g}_{i,t,k}$, $N_{i,t,k}^\top V_i^g N_{i,t,k}$, and $S_{i,t,k} N_{i,t,k}$ for all $k$ appropriately, we get an overall geometric feature residual $\bfr^{g}_{i,t}$, Jacobian $H^{g}_{i,t}$, noise covariance $V^g_{i,t}$ and correlation $S_{i,t}$. Similarly, we linearize the observation model for the object $\bfz^{o}_{i,t,k}$:
\begin{equation*}
\bfr^{o}_{i,t,k} = \bfz^{o}_{i,t,k} - \hat{\bfz}^{o}_{i,t,k} = H^{\bfs, o}_{i,t,k} \tilde{\bfs}_{i,t} + \bfv^{o}_{i,t,k}, \ \bfv^{o}_{i,t,k} \sim \calN(\mathbf{0}, V^o_i),
\end{equation*}
and concatenate $\bfr^{o}_{i,t,k}$, $H^{\bfs, o}_{i,t,k}$ and $V_i^o$ for all $k$ to get an overall object observation residual $\bfr^{o}_{i,t}$, Jacobian $H^{o}_{i,t}$, and noise covariance $V^o_{i,t}$. 

Finally, by concatenating the residuals and Jacobians of both geometric and object features, we get the overall residual $\bfr_{i,t} \! = \! [\bfr^{g\top}_{i,t} \ \bfr^{o\top}_{i,t}]^\top$ and Jacobian $H_{i,t} \! = \! [H^{g\top}_{i,t} \ H^{o\top}_{i,t}]^\top$. As shown in Appendix~\ref{sec:proof_correlated_KF} and \cite[Ch.~5]{crassidis2004optimal}, the Kalman gain is:
\begin{align}
K_{i,t} &= (\Sigma^+_{i,t} H^{\top}_{i,t} + S_{i,t}) (H_{i,t} \Sigma^+_{i,t} H^{\top}_{i,t} +  V_{i,t})^{-1},
\label{eq:Kalman_matrix} \\
V_{i,t} &= \blkdiag(V^g_{i,t} + H^g_{i,t} S_{i,t} + S^\top_{i,t} H^{g\top}_{i,t}, V^o_{i,t}),
\end{align}
where $S_{i,t}$ only appears for the geometric features, $\Sigma^+_{i,t}$ is from the prediction step, and $K_{i,t}$ can be split to $K^{\bfx}_{i,t}$ and $K^{\bfy}_{i,t}$ related to $\bfx_{i,t}$ and $\bfy_{i,t}$ respectively. The landmark mean, camera poses, and the entire covariance are updated as:
\begin{align}
\boldsymbol{\mu}^{\bfy}_{i,t} &= \boldsymbol{\mu}^{\bfy+}_{i,t} + K^{\bfy}_{i,t} \bfr_{i,t}, \notag \\
T_{i,k} &= T^+_{i,k} \exp(K^{\bfx_k}_{i,t} \bfr_{i,t}), \ K^{\bfx_k}_{i,t} \bfr_{i,t} \in \bbR^{6}, \ k\!=\! t\!-\!c\!+\!1,  \cdots ,  t, \notag \\
\Sigma_{i,t} &= (I - K_{i,t} H_{i,t}) \Sigma^+_{i,t}
\label{eq:update_step}
\end{align}
where the terms $(\cdot)^+$ are from the propagation step.

\subsection{Feature initialization}
\label{sec:feature_init}
The feature initialization is the same as \cite{OpenVINS}. To initialize an object landmark, we first linearize the observation model
\begin{equation*}
\Tilde{\bfz}^{o}_{i,t,k} = 
H^{\bfs}_{i,t,k} \Tilde{\bfs}_{i,t} +  H^{\bfp}_{i,t,k} \Tilde{\bfp}^{o}_{i,k} + \bfv^{o}_{i,t,k}, \ \bfv^{o}_{i,t,k} \sim \calN(\mathbf{0}, V^o_{i}),
\end{equation*}
where $\Tilde{\bfz}^o_{i,t,k}$, $\Tilde{\bfs}_{i,t}$ and $\Tilde{\bfp}^o_{i,k}$ are the residuals of the observation, current state, and new landmark respectively. Then, QR decomposition is performed to separate the linearized observation model into two parts: one that depends on the new landmark and another that does not:
\begin{equation}
\begin{bmatrix} \Tilde{\bfz}^{o,1}_{i,t,k} \\ \Tilde{\bfz}^{o,2}_{i,t,k} \end{bmatrix} = 
\begin{bmatrix} H^{\bfs,1}_{i,t,k} & H^{\bfp,1}_{i,t,k} \\ H^{\bfs,2}_{i,t,k} & 0 \end{bmatrix}
\begin{bmatrix} \Tilde{\bfs}_{i,t} \\ \Tilde{\bfp}^{o}_{i,k} \end{bmatrix} + 
\begin{bmatrix}
\bfv^{o,1}_{i,t,k} \\ \bfv^{o,2}_{i,t,k}
\end{bmatrix}.
\end{equation}
Thus, we can augment the current state and covariance:
\begin{equation}
    \begin{aligned}
    \bfp^{o}_{i,k} &= \hat{\bfp}^{o}_{i,k} + H^{\bfs,1 -1}_{i,t,k}\Tilde{\bfz}^{o,1}_{i,t,k}, \\
    \Sigma^{\bfs\bfp}_{i,t,k} &= -\Sigma_{i,t} H^{\bfs,1 \top}_{i,t,k} H^{\bfp,1 -\top}_{i,t,k}, \\
    \Sigma^{\bfp}_{i,t,k} &= H^{\bfp,1 -1}_{i,t,k} (H^{\bfs,1}_{i,t,k} \Sigma_{i,t} H^{\bfs,1 \top}_{i,t,k} + V^{o,1}_{i}) H^{\bfp,1 -\top}_{i,t,k},
    \end{aligned}  
    \label{eq:landm_init}
\end{equation}
where $V^{o,1}_{i}$ is the covariance of noise $\bfv^{o,1}_{i,t,k}$, 
$\Sigma^{\bfs\bfp}_{i,t,k}$ is the cross-correlation term between the current state and new landmark, and $\Sigma^{\bfp}_{i,t,k}$ is the covariance of the new landmark.

\begin{figure*}[!ht]
    \centering
    \begin{subfigure}{0.68\linewidth}
        \includegraphics[width=\linewidth]{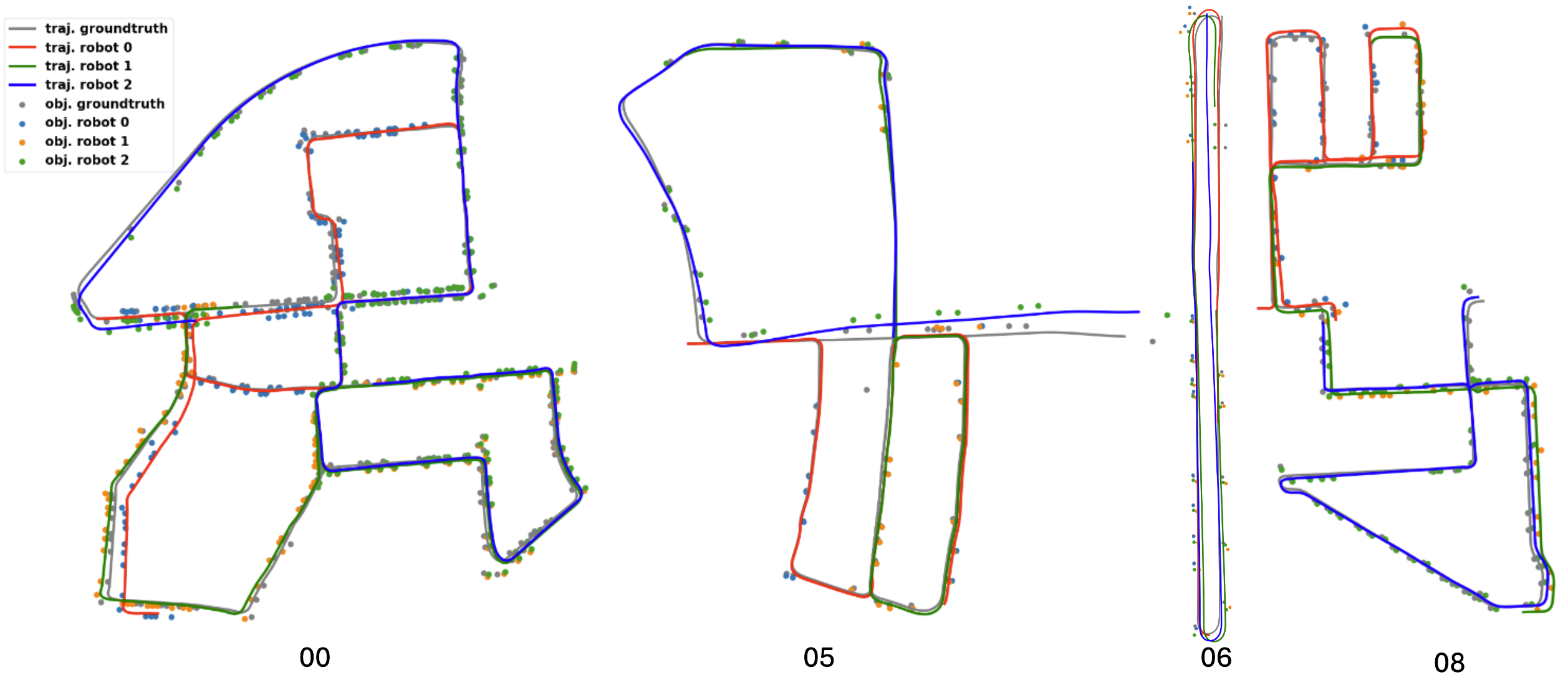}
        \caption{}
        \label{fig:kitti_visualization}
    \end{subfigure}%
    \hspace{0.1cm}%
    \begin{subfigure}{0.29\linewidth}
        \includegraphics[width=\linewidth]{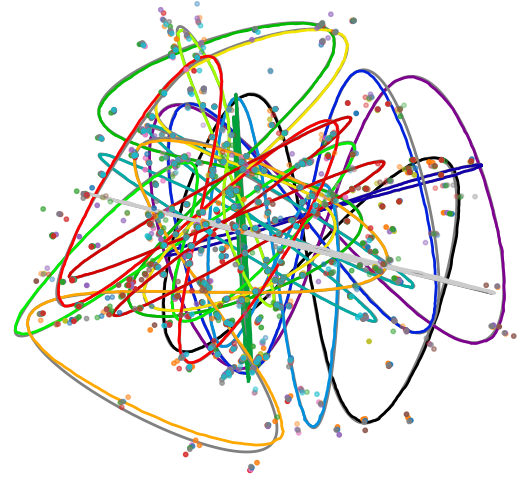}
        \caption{}
        \label{fig:simulation_visualization}
    \end{subfigure}
    \caption{Trajectory and object estimates of (a) 3 robots on KITTI sequences 00, 05, 06 and 08, (b) 15 robots in simulation.}
    \label{fig:qualitative_results}
\end{figure*}


\section{Evaluation}
\label{sec:evaluation}

We implemented the distributed MSCKF using only stereo camera observations and evaluated it on the KITTI dataset \cite{kitti} and on a simulated dataset with a larger number of robots. All experiments were carried out on a laptop with i9-11980HK@2.60 CPU, 16 GB RAM, and RTX 3080 GPU.

\subsection{KITTI dataset}

The KITTI dataset~\cite{kitti} is an autonomous driving dataset that provides stereo images, LiDAR point clouds, and annotated ground-truth robot trajectories. We provide details about the data processing and evaluation results below.

\subsubsection{Sequences and splits}
We chose long sequences in the KITTI odometry dataset with loop closures and a sufficient number of cars, used as object landmarks, namely, sequences 00, 05, 06, and 08. Each sequence is split into 3 sub-sequences representing 3 different robots. The sequence splits are as follows: sequence 00: $[0, 2000]$, $[1500, 3500]$, $[2500, 4540]$; sequence 05: $[0, 1200]$, $[800, 2000]$, $[1560, 2760]$; sequence 06: $[0, 700]$, $[200, 900]$, $[400, 1100]$; sequence 08: $[0, 2000]$, $[1000, 3000]$, $[2000, 4070]$. We used a fully connected graph and the adjacency matrix $A \in \mathbb{R}^{3\times3}$ has all elements as $\frac{1}{3}$.

\subsubsection{Geometric features}
%
We extract geometric features using the FAST corner detector \cite{fast_feature}. The KLT optical flow algorithm \cite{KLT} is used to track the features across stereo images. Outlier rejection is performed using 2-point RANSAC for temporal tracking and the known essential matrix for stereo matching. Finally, circular matching similar to \cite{circular_outlier_rejection} is performed to further remove outliers.

\subsubsection{Object features}
We utilize YOLOv6~\cite{yolov6} to detect object bounding boxes and compute the centers as our object observations. Since our work does not focus on object tracking, we directly use the instance ID annotations in SemanticKITTI \cite{semantickitti} for data association. The instance annotations are provided for LiDAR point clouds and we associate them with the bounding boxes by projecting the LiDAR point clouds onto the image plane and checking the dominant instance points inside each bounding box. 


\subsubsection{Odometry}
The relative pose $\delta T_{i,t}$ between consecutive camera frames is obtained by libviso2 \cite{libviso2}.

\begin{table}[!ht]
\caption{Trajectory RMSE in meters on KITTI sequences. Separate and consensus correspond to without/with the consensus averaging step in Sec.~\ref{sec:consensus_averaging}.}
\begin{tabular}{llllll}
\hline
                 & Robot 1 & Robot 2 & Robot 3 & Avg & Max \\ \hline
 libviso2~\cite{libviso2}  & 14.30   & 13.73  & 12.65  & 13.56 & 14.30  \\
00 Separate  & \bf{12.47}   & 7.55  & 12.42  & 10.81 & \bf{12.47}  \\
00 Consensus & 12.51   & \bf{7.13}  & \bf{8.73}   & \bf{9.45} & 12.51   \\ \hline
05 libviso2~\cite{libviso2}  & 5.36   & 6.42   & 11.57   & 7.78 & 11.57   \\
05 Separate  & 7.18   & 10.03   & \bf{7.87}   & 8.36 & 10.03   \\
05 Consensus & \bf{4.69}   & \bf{7.75}   & 9.56   & \bf{7.33} & \bf{9.56}   \\ \hline
06 libviso2~\cite{libviso2}  & 5.45   & 6.89   & 5.21   & 5.85 & 6.89   \\
06 Separate  & 4.23   & \bf{5.60}   & 4.86   & 4.90 & \bf{5.60}   \\
06 Consensus & \bf{4.23}   & 5.61   & \bf{4.76}   & \bf{4.87} & 5.61   \\ \hline
08 libviso2~\cite{libviso2}  & 9.17  & 21.05   & 11.37   & 13.86 & 21.05   \\
08 Separate  & 15.08  & 24.28   & 9.18   & 16.18 & 24.28   \\
08 Consensus & \bf{13.89}  & \bf{12.71}   & \bf{9.18}   & \bf{11.93} & \bf{13.89}   \\ \hline
\end{tabular}
\label{tab:kitti_traj_results}
\end{table}

\begin{table}[!ht]
\caption{Object estimation errors in meters on KITTI sequences. Separate and consensus correspond to without/with the consensus averaging step in Sec.~\ref{sec:consensus_averaging}.}
\begin{tabular}{llllll}
\hline
& Robot 1 & Robot 2 & Robot 3 & Avg & Max \\ \hline
00 Separate  & \bf{8.76}   & 7.61  & 8.70  & 8.36 & \bf{8.76}  \\
00 Consensus & 9.30   & \bf{6.74}  & \bf{7.16}   & \bf{7.73} & 9.30   \\ \hline
05 Separate  & 6.08   & 8.40   & 6.92   & 7.14 & \bf{8.40}   \\
05 Consensus & \bf{4.56}   & \bf{7.51}   & 8.54   & \bf{6.87} & 8.54   \\ \hline
06 Separate  & \bf{3.43}   & 5.92   & 4.64   & 4.66 & 5.92   \\
06 Consensus & 3.78   & \bf{5.63}   & \bf{4.37}   & \bf{4.59} & \bf{5.63}   \\ \hline
08 Separate  & 12.14  & 21.91   & \bf{8.19}   & 14.08 & 21.91   \\
08 Consensus & \bf{12.11}  & \bf{13.71}   & 9.21   & \bf{11.68} & \bf{13.71}   \\ \hline
\end{tabular}
\label{tab:kitti_obj_results}
\end{table}

\begin{table}[!ht]
\caption{Object position differences in meters across different robots on KITTI sequences. Separate and consensus correspond to without/with the consensus averaging step in Sec.~\ref{sec:consensus_averaging}.}
\begin{tabular}{llllll}
\hline
& Robot 1 & Robot 2 & Robot 3 & Avg & Max \\ \hline
00 Separate  & 9.69   & 10.35  & 8.35  & 9.46 & 10.35  \\
00 Consensus & \bf{5.95}   & \bf{8.62}  & \bf{5.11}   & \bf{6.56} & \bf{8.62}   \\ \hline
05 Separate  & 7.25   & 10.20   & 15.74   & 11.06 & 15.74   \\
05 Consensus & \bf{1.50}   & \bf{5.15}   & \bf{10.17}   & \bf{5.61} & \bf{10.17}   \\ \hline
06 Separate  & 5.56   & 4.97   & 4.79   & 5.12 & 5.56   \\
06 Consensus & \bf{5.01}   & \bf{4.68}   & \bf{4.43}   & \bf{4.71} & \bf{5.01}   \\ \hline
08 Separate  & 14.61  & 20.24   & 23.81   & 19.55 & 23.81   \\
08 Consensus & \bf{6.50}  & \bf{9.48}   & \bf{11.36}   & \bf{9.12} & \bf{11.36}   \\ \hline
\end{tabular}
\label{tab:kitti_obj_diff}
\end{table}

\begin{figure}[!ht]
    \centering
    \includegraphics[width=0.85\linewidth]{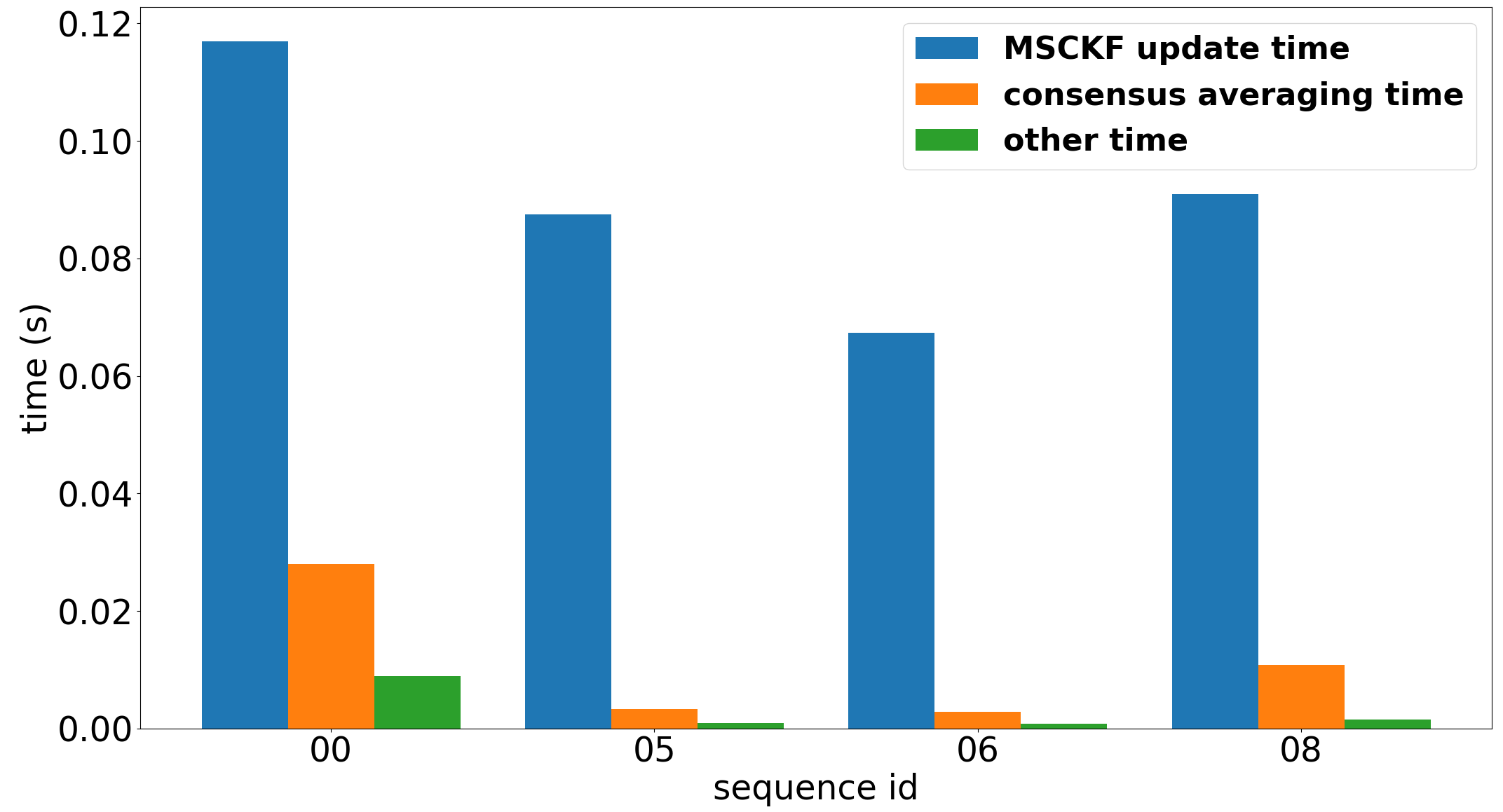}
    \caption{Time consumed by different components per robot per frame, including MSCKF update, consensus averaging, and other (prediction and landmark initialization).}
    \label{fig:time_analysis}
\end{figure}

\begin{figure}
    \centering
    \includegraphics[width=\linewidth]{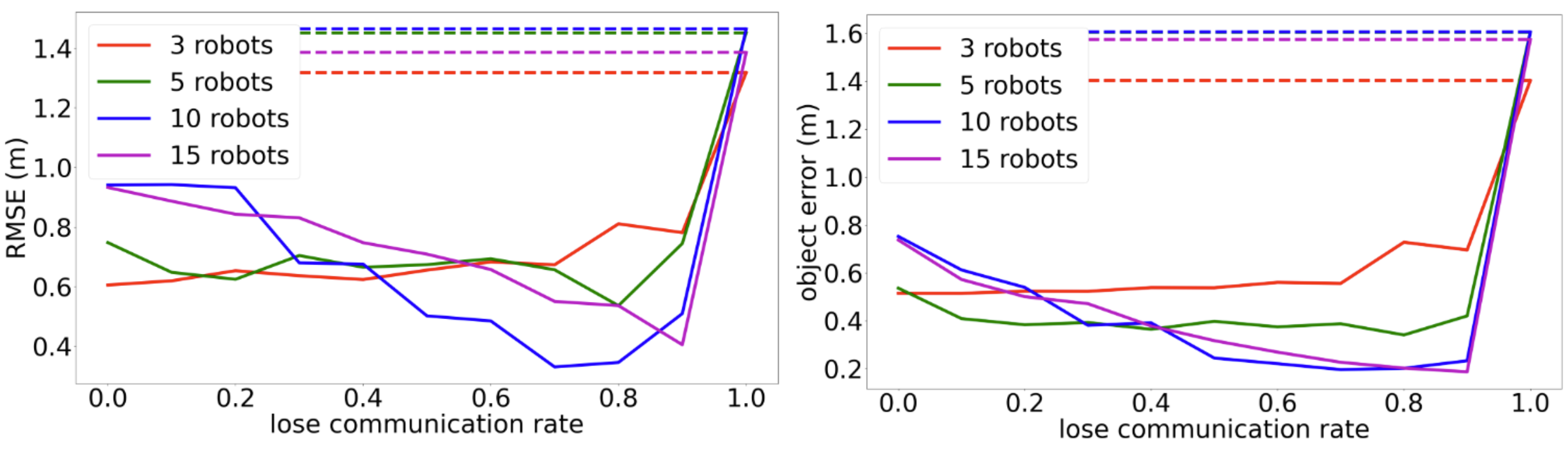}
    \caption{Analysis of the effect of the robot network connectivity. The dashed lines show RMSE without averaging.}
    \label{fig:connectivity_analysis}
\end{figure}

\subsubsection{Results and analysis}

We found empirically that setting the correlation matrix \eqref{eq:correlation_def} to zero gives the best results. We assume that this is because libviso2 \cite{libviso2} uses SURF features \cite{bay2006surf}, while the update step is performed using FAST features \cite{fast_feature} and the correlation is negligible. Qualitative results from three-robot collaborative object SLAM on the KITTI dataset are shown in Fig.~\ref{fig:kitti_visualization}. We show the root mean square error (RMSE) of the robot trajectory estimates in Table \ref{tab:kitti_traj_results} and the mean distances between estimated object positions and the ground truth in Table~\ref{tab:kitti_obj_results}. We do not use alignment for the trajectory RMSE \cite{rpg} because trajectory transformations affect the object mapping errors.
Some ways to mitigate the effect of bad estimates include resilient consensus \cite{Saldana2017resilient} or adaptive adjacency weights $A_{ij}$ depending on the robots' measurement accuracy.
Although consensus averaging can harm the estimation accuracy for some robots compared to running individual MSCKF algorithms for each robot, it helps improve the overall team performance in both localization and object mapping. The separate MSCKF sometimes perform worse than libviso2~\cite{libviso2}. This is because the object observation is too noisy. Updating with only object features can give an error up to 10 times as in Table~\ref{tab:kitti_traj_results}.
The distributed MSCKF achieves better agreement in the map estimates among the robots. We compare the object position differences with and without averaging in Table~\ref{tab:kitti_obj_diff} to quantify the reduction in disagreement. We also claim that the consensus averaging step does not add much time overhead because the robots communicate only common landmarks, meaning that the corresponding covariance $\Sigma^{\bfy}_{i,t}$ in \eqref{eq:averaging_KF} is small, and only perform averaging with one-hop neighbors. The computation time used by different components in the algorithm is shown in Fig.~\ref{fig:time_analysis}. Consensus averaging takes a small portion of time compared with the MSCKF update.

\subsection{Simulated data}
To test our algorithm with increasing numbers of robots, we generated simulated data for 3-15 robots.

\subsubsection{Data generation} 
Each robot moves along a Lissajous curve and odometry measurements are generated by adding perturbations to the relative transformation between consecutive poses. The landmarks, both geometric and objects, are generated by randomly sampling from Gaussian distributions centered at each trajectory point. There are 210 objects in the scene with different numbers of robots. The observations are then generated by projecting the corresponding landmarks onto the image plane and adding noise. 


\subsubsection{Results}

The results are visualized in Fig.~\ref{fig:simulation_visualization}. The quantitative results from the simulations with a fully connected graph are shown in Table~\ref{tab:simulation_results}. Our algorithm scales efficiently with an increasing number of robots while continuing to outperform decoupled MSCKF algorithms for each robot. We also analyze the effect of connectivity in Fig.~\ref{fig:connectivity_analysis}. In the experiment, each robot loses communication with each neighbor according to rate $r$, i.e., each edge in the graph at each time step is removed independently with probability $r$. We see that our algorithm is robust to communication loss rate up to $r = 0.9$. With small numbers of robots (3 and 5), the errors oscillate as the loss rate increases but with relatively large numbers of robots (10 and 15), the errors seem to decrease as the loss rate increases. Analyzing the effect of randomly connected graphs, e.g., broad gossip \cite{aysal2009broadcast}, will be considered in future work.

\begin{table}[t]
\centering
\caption{Trajectory, object errors in meters and consensus averaging time per robot per timestep in seconds in simulation with different numbers of robots. Separate and consensus correspond to without/with the consensus averaging step in Sec.~\ref{sec:consensus_averaging}.}
\begin{tabular}{lllll}
\hline
Number of robots        & 3 & 5 & 10 & 15 \\ \hline
Separate trajectory RMSE (m) & 1.318  & 1.452  & 1.464   & 1.385   \\
Consensus trajectory RMSE (m) & \bf{0.605}  & \bf{0.748}  & \bf{0.941}   & \bf{0.933}   \\ \hline
Separate object error (m) & 1.402  & 1.604  & 1.605   & 1.573   \\
Consensus object error (m) & \bf{0.514}  & \bf{0.536}  & \bf{0.752}   & \bf{0.737}   \\ \hline
Consensus averaging time (s) & 0.021 & 0.022 & 0.026 & 0.028 \\ \hline
\end{tabular}
\label{tab:simulation_results}
\end{table}

\section{CONCLUSION}
\label{sec:conclusion}

We developed a distributed vision-only filtering approach for multi-robot object SLAM. Our experiments demonstrate that the method improves both localization and mapping accuracy while achieving agreement among the robots on a common object map. Since the algorithm is fully distributed, it allows efficient scaling of the number of robots in the team. Having a common object map is useful for collaborative task planning, which we plan to explore in future work.


\section*{ACKNOWLEDGMENTS}
The authors are grateful to Shubham Kumar, Shrey Kansal, and Kishore Nukala from University of California San Diego for technical discussions and help with dataset preparation.

\bibliographystyle{ieeetr}
\bibliography{root}
\appendices
\appendix
\section{APPENDIX}
\label{sec:appendix}

\subsection{Proof of Proposition~\ref{prop:DM_solution}}
\label{sec:proof_DM_solution}
Taking the constraint $\int q_i = 1$ into account, we consider the Lagrangian:
\begin{equation*}
	\begin{aligned}
	\calL(q_i, \lambda) &= \bbE_{q_i}\brl{ \frac{\delta c_i}{\delta q_i} (q_i^{(k)})} + \frac{1}{\alpha^{(k)}} \KL(q_i(\bfx_i|\bfy) || q_i^{(k)}(\bfx_i|\bfy)) \\
	&\quad + \sum_{j \in \calN_i} \frac{A_{ij}}{\alpha_{k}} \KL(q_i(\bfy) || q_j^{(k)}(\bfy)) + \lambda (\int q_i - 1),
	\end{aligned}
\end{equation*}
where $\lambda$ is a multiplier. The variation of $\calL$ w.r.t. $q_i$ is:
\begin{equation*}
	\begin{aligned}
	\frac{\delta \calL}{\delta q_i} &= \frac{\delta c_i}{\delta q_i} (q_i^{(k)}) + \frac{1}{\alpha_{k}} (1 + \log q_i) - \frac{1}{\alpha_{k}} \log q_i^{(k)} (\bfx_i | \bfy) \\
	&\quad - \frac{1}{\alpha_{k}} \sum_{j \in \calN_i} A_{ij} \log q_j^{(k)} (\bfy) + \lambda \\
	&= -\log p_i(\bfx_i, \bfy, \bfz_i, \bfu_i) + 1 + \log q_i^{(k)} + \frac{1}{\alpha_{k}} (1 + \log q_i) \\
	&\quad - \frac{1}{\alpha_{k}} \log q_i^{(k)} (\bfx | \bfy) - \frac{1}{\alpha_{k}} \sum_{j \in \calN_i} A_{ij} \log q_j^{(k)} (\bfy) + \lambda. \\
	\end{aligned}
\end{equation*}
Setting the variation to zero and solving for $q_i$, leads to:
\begin{align*}
q_i &= e^{-1 - \alpha_{k} - \alpha_{k}\lambda} [p_i (\bfx_i, \bfy, \bfz_i, \bfu_i) / q_i^{(k)}(\bfx_i, \bfy)]^{\alpha_{k}} \\
&\quad \ \quad q_i^{(k)}(\bfx_i | \bfy) \prod_{j \in \calN_{i}} [q^{(k)}_j (\bfy)]^{A_{ij}} \qedhere
\\
&\propto [p_i (\bfx_i,\! \bfy,\! \bfz_i,\! \bfu_i) \! / \! q_i^{(k)}(\bfx_i,\! \bfy)]^{\alpha_{k}} q_i^{(k)}(\bfx_i | \bfy) \!\! \prod_{j \in \calN_{i}} \!\! [q^{(k)}_j (\bfy)]^{A_{ij}}.  \qedhere
\end{align*}
thus completing the proof.

\subsection{Proof of Proposition~\ref{prop:VI_Gaussian_sol}}
\label{sec:proof_VI_solution}

Suppose that the density $q_i^{(k)}(\bfx_i, \bfy)$ of $\calN(\boldsymbol{\mu}_{i,(k)}, \Sigma_{i,(k)})$ is the joint density at iteration $k$ and the density $q_j^{(k)}(\bfy)$ of $\calN(\boldsymbol{\mu}_{j,(k)}^{\bfy}, \Sigma_{j,(k)}^{\bfy})$ is the marginal density over $\bfy$. To save space, we simplify the notation $q_i^{(k)}(\bfx_i, \bfy)$ as $q_i^{(k)}$. The objective function~\eqref{eq:distributed_MD} can be rewritten as
\begin{align}
    &g_i^{(k)} = \bbE_{q_i} [\frac{\delta c_i}{\delta q_i} (q_i^{(k)})] + \frac{1}{\alpha_{k}} \bbE_{q_i}[\log q_i] - \frac{1}{\alpha_{k}} \bbE_{q_i} [\log q_i^{(k)}] \notag \\
    &\quad - \frac{1}{\alpha_{k}} \bbE_{q_i}[\log q_i(\bfy)] + \frac{1}{\alpha_{k}} \bbE_{q_i}[\log q^{(k)}_i(\bfy)] \notag \\
    &\quad + \sum_{j \in \calN_i} \frac{A_{ij}}{\alpha_{k}} \bbE_{q_i}[\log q_i(\bfy)] - \sum_{j \in \calN_i} \frac{A_{ij}}{\alpha_{k}} \bbE_{q_i} [\log q_j^{(k)} (\bfy)] \notag \\
    &= \bbE_{q_i} [\frac{\delta c_i}{\delta q_i} (q_i^{(k)})] - \frac{1}{2\alpha_{k}} \log |\Sigma_i| - \frac{1}{\alpha_{k}} \bbE_{q_i} [\log \frac{q_i^{(k)}}{q^{(k)}_i(\bfy)} ] \notag \\
    &\quad - \sum_{j \in \calN_i} \frac{A_{ij}}{\alpha_{k}} \bbE_{q_i} [\log q_j^{(k)} (\bfy)].
\end{align}
With the above objective, we can compute the derivatives with respect to $\boldsymbol{\mu}_i$ and $\Sigma_i$ as follows
\begin{align}
    &\frac{\partial g_i^{(k)}}{\partial \boldsymbol{\mu}^\top_i} = \Sigma^{-1}_i \bbE_{q_i} [(\bfs_i - \boldsymbol{\mu}_i) \frac{\delta c_i}{\delta q_i} (q_i^{(k)})] \notag \\
    &\quad - \frac{1}{\alpha_{k}} \Sigma_i^{-1} \bbE_{q_i} [(\bfs_i - \boldsymbol{\mu}_i) \log \frac{q_i^{(k)}}{q_i^{(k)}(\bfy)} ] \notag \\
    &\quad - \sum_{j \in \calN_i} \frac{A_{ij}}{\alpha_{k}} \Sigma_i^{-1} \bbE_{q_i}[(\bfs_i - \boldsymbol{\mu}_i) \log q_j^{(k)} (\bfy)], 
\label{eq:dmu}
\end{align}
\begin{align*}
    &\frac{\partial g_i^{(k)}}{\partial \boldsymbol{\mu}^\top_i \partial \boldsymbol{\mu}_i} = \Sigma_i^{-1} \bbE_{q_i} [(\bfs_i - \boldsymbol{\mu}_i)(\bfs_i - \boldsymbol{\mu}_i)^\top \frac{\delta c_i}{\delta q_i} (q_i^{(k)})] \Sigma_i^{-1} \notag \\
    &\ - \frac{1}{\alpha_{k}} \Sigma_i^{-1} \bbE_{q_i} [(\bfs_i - \boldsymbol{\mu}_i)(\bfs_i - \boldsymbol{\mu}_i)^\top \log \frac{q_i^{(k)}}{q_i^{(k)}(\bfy)} ] \Sigma_{i}^{-1} \notag \\
    &\ - \sum_{j \in \calN_i} \frac{A_{ij}}{\alpha_{k}} \Sigma_i^{-1} \bbE_{q_i} [(\bfs_i - \boldsymbol{\mu}_i)(\bfs_i - \boldsymbol{\mu}_i)^\top \log q_j^{(k)}(\bfy)] \Sigma_i^{-1}, \\
\end{align*}
\begin{align*}
    &\frac{\partial g_i^{(k)}}{\partial \Sigma_i} = \frac{1}{2} \Sigma_i^{-1} \bbE_{q_i} [(\bfs_i - \boldsymbol{\mu}_i)(\bfs_i - \boldsymbol{\mu}_i)^\top \frac{\delta c_i}{\delta q_i} (q_i^{(k)})] \Sigma_i^{-1} \\
    &\ - \frac{1}{2\alpha_{k}} \!\! \left( \Sigma_i^{-1} \! + \! \Sigma_i^{-1} \bbE_{q_i} [(\bfs_i \! - \! \boldsymbol{\mu}_i)(\bfs_i \! - \! \boldsymbol{\mu}_i)^\top \!\! \log \frac{q_i^{(k)}}{q_i^{(k)}(\bfy)} ] \Sigma_i^{-1} \right) \\
    & - \sum_{j \in \calN_i} \frac{A_{ij}}{2\alpha_{k}} \Sigma_i^{-1} \bbE_{q_i} [(\bfs_i - \boldsymbol{\mu}_i)(\bfs_i - \boldsymbol{\mu}_i)^\top \log q_j^{(k)}(\bfy)] \Sigma_i^{-1}.
\end{align*}
We can see that, 
\begin{equation}
    \frac{\partial g_i^{(k)}}{\partial \boldsymbol{\mu}^\top_i \partial \boldsymbol{\mu}_i} =  2 \frac{\partial g_i^{(k)}}{\partial \Sigma_i} + \frac{1}{\alpha_{k}} \Sigma_i^{-1}
    \label{eq:relation_secdmu_dSigma}
\end{equation}
To compute $\frac{\partial g_i^{(k)}}{\partial \boldsymbol{\mu}^\top_i \partial \boldsymbol{\mu}_i}$, following \cite{barfoot2020exactly}, we make use of \emph{Stein's lemma}~\cite{stein1981estimation}:
\begin{equation}
    \bbE_{q_i} [(\bfs_i - \boldsymbol{\mu}_i) f_i(\bfs_i)] \equiv \Sigma_i \bbE_{q_i} [\frac{\partial f_i(\bfs_i)}{\partial \bfs_i^\top}].
\end{equation}
Note that $\frac{\delta c_i}{\delta q_i} (q_i^{(k)})$, $\log q_i^{(k)}$ and $\log q_i^{(k)}(\bfy)$ are all functions of $\bfs_i$. We have $\frac{\delta c_i}{\delta q_i} = - \log p_i(\bfs_i, \bfz_i, \bfu_i) + \log q_i(\bfs_i) + 1$~\cite[Proposition~1]{paritosh2020marginal} and $\log p_i(\bfs_i, \bfz_i, \bfu_i)$ can be decomposed with \eqref{eq:likelyhood_decomposition} so that \eqref{eq:dmu} can be rewritten as
\begin{align}
    &\frac{\partial g_i^{(k)}}{\partial \boldsymbol{\mu}^\top_i} = \bbE_{q_i} [\frac{\partial \frac{\delta c_i}{\delta q_i} (q_i^{(k)})}{\partial \bfs^\top_i}] - \frac{1}{\alpha_{k}} \bbE_{q_i} [\frac{\partial \log q_i^{(k)}}{\partial \bfs^\top_i} ] \notag \\
    &\quad + \frac{1}{\alpha_{k}} \bbE_{q_i} [\frac{\partial \log q_i^{(k)}(\bfy)}{\partial \bfs^\top_i} ] - \sum_{j \in \calN_i} \frac{A_{ij}}{\alpha_{k}} \bbE_{q_i} [\frac{\partial \log q_j^{(k)}(\bfy)}{\partial \bfs^\top_i}] \notag \\
    &= - \Bar{F}^{-\top}_i \Bar{W}_i^{-1} (\Bar{G}_i \bfu_i - \Bar{F}^{-1}_i \boldsymbol{\mu}_i) - \Bar{H}^\top_i \Bar{V}_i^{-1} (\bfz_i - \Bar{H}_i \boldsymbol{\mu}_i) \notag \\
    &\quad - \Sigma_{i,(k)}^{-1} (\boldsymbol{\mu}_i - \boldsymbol{\mu}_{i,(k)}) + \frac{1}{\alpha_{k}} \Sigma_{i,(k)}^{-1} (\boldsymbol{\mu}_i - \boldsymbol{\mu}_{i,(k)}) \notag \\
    &\quad - \frac{1}{\alpha_{k}} L^\top \Sigma^{\bfy -1}_{i, (k)} L (\boldsymbol{\mu}_i - \boldsymbol{\mu}_{i,(k)}) \notag \\
    &\quad + \frac{1}{\alpha_{k}} \sum_{j \in \calN_i} A_{ij} L^\top \Sigma^{\bfy -1}_{j, (k)} L (\boldsymbol{\mu}_i - \boldsymbol{\mu}_{j,(k)}).
    \label{eq:dmu_stein}
\end{align}
where $\boldsymbol{\mu}_{i,(k)}, \ \Sigma_{i,(k)}$ are the mean and covariance for $q^{(k)}$ respectively and $L = [0 \ I]$ such that 
\begin{equation}
    L^\top \Sigma^{\bfy}_{i} L = \begin{bmatrix}
        0 & 0 \\
        0 & \Sigma^{\bfy}_{i} \\
    \end{bmatrix}.
\end{equation}
Therefore, the second derivative with respect to $\boldsymbol{\mu}_i$ is
\begin{equation}
    \begin{aligned}
        &\frac{\partial g_i^{(k)}}{\partial \boldsymbol{\mu}^\top_i \partial \boldsymbol{\mu}_i} = \Bar{F}^{-\top}_i \Bar{W}_i^{-1} \Bar{F}^{-1}_i + \Bar{H}^\top_i \Bar{V}_i^{-1} \Bar{H}_i - \Sigma_{i,(k)}^{-1} \\
        &\quad + \frac{1}{\alpha_{k}} (\Sigma_{i, (k)}^{-1} - L^\top \Sigma^{\bfy -1}_{i, (k)} L + \sum_{j \in \calN_{i}} A_{ij} L^\top \Sigma^{\bfy -1}_{j, (k)} L).
    \end{aligned}
    \label{eq:seond_dmu}
\end{equation}
Setting $\frac{\partial g_i^{(k)}}{\partial \Sigma_i} = 0$ and using \eqref{eq:relation_secdmu_dSigma}\eqref{eq:seond_dmu}, we get
\begin{equation}
    \begin{aligned}
        \Sigma_i^{-1} &= \alpha_{k} (\Bar{F}^{-\top}_i \Bar{W}_i^{-1} \Bar{F}^{-1}_i + \Bar{H}^\top_i \Bar{V}_i^{-1} \Bar{H}_i - \Sigma_{i,(k)}^{-1} ) \\
        &\ + \Sigma_{i,(k)}^{-1} - L^\top \Sigma^{\bfy -1}_{i, (k)} L + \sum_{j \in \calN_{i}} A_{ij} L^\top \Sigma^{\bfy -1}_{j, (k)} L.
    \end{aligned}
    \label{eq:info_matrix_intermediate}
\end{equation}
Further setting $\frac{\partial g_i^{(k)}}{\partial \boldsymbol{\mu}_i^\top} = 0$ and using \eqref{eq:dmu_stein}\eqref{eq:info_matrix_intermediate}, we get
\begin{equation}
    \begin{aligned}
        &\Sigma_i^{-1} \boldsymbol{\mu}_i = \alpha_{k} (\Bar{F}_i^{-\top} \Bar{W}_i^{-1} \Bar{G}_i \bfu_i + \Bar{H}_i^\top \Bar{V}_i^{-1} \bfz_i - \Sigma_{i,(k)}^{-1} \boldsymbol{\mu}_{i,(k)}) \\
        &\ + \Sigma_{i,(k)}^{-1} \boldsymbol{\mu}_{i,(k)} \!\! - \!\! L^\top \Sigma_{i, (k)}^{\bfy -1} L \boldsymbol{\mu}_{i,(k)} \!\! + \!\! \sum_{j \in \calN_i} A_{ij} L^\top \Sigma^{\bfy -1}_{j, (k)} L \boldsymbol{\mu}_{j,(k)}.
    \end{aligned}
    \label{eq:info_vector_intermediate}
\end{equation}
Note that $\calG(\Sigma^{\bfy -1}_{i,(k)}\boldsymbol{\mu}^{\bfy}_{i,(k)},  \Sigma^{\bfy -1}_{i,(k)})$ is the marginal distribution in the information space. In this way
\begin{equation}
    \begin{aligned}
        \sum_{j \in \calN_i} A_{ij} L^\top \Sigma^{\bfy -1}_{j, (k)} L
        = \begin{bmatrix}
            0 & 0 \\
            0 & \sum_{j \in \calN_i} A_{ij} \Sigma^{\bfy -1}_{j, (k)}
          \end{bmatrix}
        = \begin{bmatrix}
            0 & 0 \\
            0 & \bar{\Sigma}^{\bfy-1}_{i,(k)}
          \end{bmatrix}.
    \end{aligned}
\label{eq:marginal_cov_avg}
\end{equation}
Similarly,
\begin{align}
    \sum_{j \in \calN_i} A_{ij} L^\top \Sigma^{\bfy -1}_{j, (k)} L \boldsymbol{\mu}_{j,(k)} &= 
    \begin{bmatrix}
        \mathbf{0} \\
        \sum_{j \in \calN_i} \Sigma^{\bfy -1}_{j,(k)} \boldsymbol{\mu}_{j,(k)}
    \end{bmatrix} \notag \\ 
    &=
    \begin{bmatrix}
        \mathbf{0} \\
        \bar{\Sigma}^{\bfy -1}_{i,(k)} 
        \bar{\boldsymbol{\mu}}_{i,(k)}
    \end{bmatrix}. \label{eq:marginal_mean_avg}
\end{align}
Finally, setting
\begin{align}
\bar{\Omega}_{i,(k)} &= \Sigma_{i,(k)}^{-1} - L^\top \Sigma^{\bfy -1}_{i, (k)} L + \sum_{j \in \calN_{i}} A_{ij} L^\top \Sigma^{\bfy -1}_{j, (k)} L, \notag \\
\bar{\omega}_{i,(k)} &= \Sigma_{i,(k)}^{-1} \boldsymbol{\mu}_{i,(k)} - L^\top \Sigma_{i, (k)}^{\bfy -1} L \boldsymbol{\mu}_{i,(k)} \notag \\
&\quad + \sum_{j \in \calN_i} A_{ij} L^\top \Sigma^{\bfy -1}_{j, (k)} L \boldsymbol{\mu}_{j,(k)},  \label{eq:avg_info_pdf}
\end{align}
we can verify that
\begin{equation}
    \begin{aligned}
    \bar{\Omega}^{-1}_{i,(k)} &= \bar{\Sigma}_{i,(k)} = 
    \begin{bmatrix}
	A \bar{\Sigma}^{\bfy}_{i,(k)} A^\top + P & A \bar{\Sigma}^{\bfy}_{i,(k)} \\
	\bar{\Sigma}^{\bfy}_{i,(k)} A^\top & \bar{\Sigma}^{\bfy}_{i,(k)}
    \end{bmatrix} \\
    \bar{\Sigma}_{i,(k)} \bar{\omega}_{i,(k)} &= \bar{\boldsymbol{\mu}}_{i,(k)} = 
    \begin{bmatrix}
        A \bar{\boldsymbol{\mu}}^{\bfy}_{i,(k)} + \bfb \\
        \bar{\boldsymbol{\mu}}^{\bfy}_{i,(k)}
    \end{bmatrix},    
    \end{aligned}
\label{eq:new_joint}
\end{equation}
where
\begin{align}
    &A = \Sigma^{\bfx \bfy}_{i,(k)} \Sigma^{\bfy -1}_{i,(k)}, \bfb = \boldsymbol{\mu}^{\bfx}_{i,(k)} - \Sigma^{\bfx \bfy}_{i,(k)} \Sigma^{\bfy -1}_{i,(k)} \boldsymbol{\mu}^{\bfy}_{i,(k)}, \notag \\ 
    &P = \Sigma^{\bfx}_{i,(k)} - \Sigma^{\bfx}_{i,(k)} \Sigma^{\bfy -1}_{i,(k)} \Sigma^{\bfx \bfy \top}_{i,(k)}.
\end{align}
By substituting $\bar{\Omega}_{i,(k)} = \bar{\Sigma}_{i,(k)}^{-1}$ and $\bar{\omega}_{i,(k)} = \bar{\Sigma}_{i,(k)}^{-1} \bar{\boldsymbol{\mu}}_{i,(k)}$ into \eqref{eq:avg_info_pdf} and further into \eqref{eq:info_matrix_intermediate}\eqref{eq:info_vector_intermediate}, we will get exactly the same result as in Proposition~\ref{prop:VI_Gaussian_sol}~\eqref{eq:VI_Gaussian_update}, thus completing the proof.

The result can also be interpreted by Proposition~\ref{prop:DM_solution}, \eqref{eq:marginal_cov_avg} and \eqref{eq:marginal_mean_avg} are geometric averaging of the marginal Gaussians over $\bfy$~\cite[Lemma~1]{atanasov2014joint} which correspond to $\prod_{j \in \calN_{i}} [ q^{(k)}_j (\bfy) ]^{A_{ij}}$  in \eqref{eq:DM_solution}. \eqref{eq:new_joint} is constructing a new joint distribution with the averaged marginal (Proposition~\ref{prop:construct_joint}) which correspond to $ q_i^{(k)}(\bfx_i | \bfy) \prod_{j \in \calN_{i}} [ q^{(k)}_j (\bfy) ]^{A_{ij}}$ in \eqref{eq:DM_solution}. The step term $\alpha_k (\Bar{F}^{-\top}_i \Bar{W}_i^{-1} \Bar{F}^{-1}_i \!+ \Bar{H}^\top_i \Bar{V}_i^{-1} \Bar{H}_i -\! \Sigma^{-1}_{i,(k)})$ and $\alpha_{k} (\Bar{F}_i^{-\top} \Bar{W}_i^{-1} \Bar{G}_i \bfu_i + \Bar{H}_i^\top \Bar{V}_i^{-1} \bfz_i - \Sigma^{-1}_{i,(k)}\boldsymbol{\mu}_{i,(k)})$ correspond to $[p_i (\bfx_i, \bfy, \bfz_i, \bfu_i) / q_i^{(k)}(\bfx_i, \bfy)]^{\alpha_{k}}$ in \eqref{eq:DM_solution}.



\subsection{Derivation of the Kalman filter with noise correlation}
\label{sec:proof_correlated_KF}
Suppose that there is correlation between the motion and observation noise:
\begin{equation}
    S = \bbE[\bfw_{t-1} \bfv_t^\top],
\end{equation}
and take two timesteps of only one robot's state into account, namely, $\bfs_{t-1}$ and $\bfs_t$, by defining the following lifted terms
\begin{align}
    & \bfs \! = \! [\bfs_{t-1}^\top \ \bfs^\top_{t}]^\top\!\!, \bfu^\top \!\! = \! [\boldsymbol{\mu}_{t-1}^\top \ \bfu_{t-1}^\top ]^\top\!\!, \Bar{H} \! = \! [0 \ H], \bar{S} \! = \! [0 \ S^\top]^\top\!\!, \notag \\
    & \Bar{W} = 
    \begin{bmatrix}
        \Sigma_{t-1} & 0 \\
        0                   &  W
    \end{bmatrix}, 
    \Bar{F} = 
    \begin{bmatrix}
        I       & 0 \\
        F       & I \\
    \end{bmatrix}, 
    \Bar{G} = 
    \begin{bmatrix}
    I   & 0 \\
    0   & G
    \end{bmatrix}, \label{eq:lifted_terms}
\end{align}
where we omit the subscript $i$ since there is only one robot. The term $\log p(\bfs, \bfz_t, \bfu)$ in \eqref{eq:original_objective} can be re-written as:
\begin{equation}
    \log \! p(\bfs, \bfz_t, \bfu) \!\! \propto \! - \frac{1}{2} \!
    \begin{bmatrix}
        \bar{F}^{-1} \bfs \! - \! \bar{G} \bfu \\
        \bfz_t \! - \! \bar{H}\bfs
    \end{bmatrix} \!\!
    \begin{bmatrix}
        \bar{W} & \bar{S} \\
        \bar{S}^\top & V
    \end{bmatrix}^{-1} \!\!
    \begin{bmatrix}
        \bar{F}^{-1} \bfs \! - \! \bar{G} \bfu \\
        \bfz_t \! - \! \bar{H}\bfs
    \end{bmatrix}^\top \!\!\!\!.
\end{equation}
The derivatives in \eqref{eq:dmu}, i.e., $\frac{\partial g_i^{(k)}}{\partial \boldsymbol{\mu}^\top_i}$, $\frac{\partial g_i^{(k)}}{\partial \Sigma_i}$, and $\frac{\partial g_i^{(k)}}{\partial \Sigma_i}$ remain the same format except $c_i(q_i)$ in \eqref{eq:original_objective} changes. Therefore, we can follow the exact derivation steps in Sec.~\ref{sec:proof_VI_solution}, which we omit and directly show that \eqref{eq:regular_KF_info_space} becomes
\begin{align}\label{eq:correlated_KF_intermediate}
    \bar{\Sigma}^{-1} &= \bar{F}^{-\top} A^{-1} \bar{F} + \bar{H}^\top C^\top \bar{F}^{-1} + \bar{F}^{-\top} C \bar{H} + \bar{H} B^{-1} \bar{H}, \notag \\
    \bar{\Sigma}^{-1} \bar{\boldsymbol{\mu}} &= \bar{F}^{-\top} \bar{A}^{-1} \bar{G} \bar{\bfu} + \bar{H} \bar{C}^\top \bar{\bfu} + \bar{F}^{-\top} \bar{C} \bfz_t + \bar{H}^\top \bar{B}^{-1} \bfz_t,
\end{align}
where 
\begin{equation} \label{eq:intermediate_ABC}
    \bar{A} = \bar{W} - \bar{S} V^{-1} \bar{S}^\top, \ \bar{B} = V - \bar{S}^\top \bar{W}^{-1} \bar{S}, \bar{C} = -\bar{A}^{-1} \bar{S} V^{-1}.
\end{equation}
By substituting \eqref{eq:intermediate_ABC} and \eqref{eq:lifted_terms} into \eqref{eq:correlated_KF_intermediate}, we get
\begin{align}
    &\bar{\Sigma}^{-1} = \notag \\
    &\begin{bmatrix}
        \Sigma_{t-1}^{-1} \! + \! F^\top \!\! A^{-1} F & F^\top \! A^{-1} S V^{-1} H \! - \! F^\top \!\! A^{-1} \\
        H^\top \! V^{-1} \! S^\top \! A^{-1} \! F \! - \! A^{-1} \! F & A^{-1}\! + \! H^\top \!\! B^{-1} H ...  \\
        \ & - \! H^\top \! V^{-1} \! S^\top \!\! A^{-1} \! - \! A^{-1} S V^{-1} H
    \end{bmatrix}\!\!, \notag \\
    &\bar{\Sigma}^{-1} \bar{\boldsymbol{\mu}} = \notag \\
    &\begin{bmatrix}
        \Sigma_{t-1}^{-1} \boldsymbol{\mu}_{t-1} \! - \!F^\top A^{-1} G\bfu_t \! + \! F^\top A^{-1} S V^{-1} \bfz_t \\
        A^{-1} G \bfu_t \! - \! H^\top V^{-1} S^\top A^{-1} G \bfu_t \! - \! A^{-1} S V^{-1} \bfz_t \! + \! H^\top B^{-1} \bfz_t
    \end{bmatrix}\!\!.
    \label{eq:correlated_KF_intermediate_expanded}
\end{align}
where 
\begin{equation}
    A = W - S V^{-1} S^\top, \ B = V - S^\top W^{-1} S.
\end{equation}
Following the approach in Eq. 3.113 in \cite[Chapter~3.3.2]{state_barfoot}, we multiply both sides of \eqref{eq:correlated_KF_intermediate_expanded} by the matrix:
\begin{equation}
    \begin{bmatrix}
    I & 0 \\
    (A^{-1} F - H^\top V^{-1} S^\top A^{-1} F) (\Sigma_{t-1}^{-1} + F^\top A^{-1} F)^{-1} & I
    \end{bmatrix}
\end{equation}
to obtain:
\begin{align}
    &\quad \begin{bmatrix}
        \Sigma_{t-1}^{-1} \! + \! F^\top \! A^{-1} F & F^\top \! A^{-1} S V^{-1} H \! - \! F^\top \! A^{-1} \\
        0 & \Sigma_t^{-1}
    \end{bmatrix}
    \begin{bmatrix}
        \hat{\boldsymbol{\mu}}_{t-1} \\
        \boldsymbol{\mu}_t
    \end{bmatrix} \notag \\
    &=
    \begin{bmatrix}
        \Sigma_{t-1}^{-1} \boldsymbol{\mu}_{t-1} - F^\top A^{-1} G \bfu_t + F^\top S V^{-1} \bfz_t \\
        \Sigma_t^{-1} \boldsymbol{\mu}_t
    \end{bmatrix},
\end{align}
where $\hat{\boldsymbol{\mu}}_{t-1}$ is different from the estimation $\boldsymbol{\mu}_{t-1}$ at time $t-1$ since it is estimated with future observation, i.e. $\bfz_t$ and
\begin{align}
    \Sigma_t^{-1} \! =& \! - \! (H^\top V^{-1} S^\top A^{-1} F - A^{-1} F) (\Sigma_{t-1}^{-1} + F^\top A^{-1} F)^{-1} \notag \\
    &\ (F^\top A^{-1} S V^{-1} H - F^\top A^{-1}) + A^{-1} + H^\top B^{-1} H \notag \\
    &\ - H^\top V^{-1} S^\top A^{-1} - A^{-1} S V^{-1} H, \label{eq:cov_inv_1} \\
    \Sigma_t^{-1} \boldsymbol{\mu}_t \! &= \! (H^\top \! V^{-1} S^\top \! A^{-1} F - A^{-1} F) (\Sigma_{t-1}^{-1} \! + \! F^\top \! A^{-1} F)^{-1} \notag \\
    &(\Sigma_{t-1}^{-1} \boldsymbol{\mu}_{t-1} \! - \! F^\top \! A^{-1} \! G \bfu_t \! + \! F^\top \! A^{-1} \!  S V^{-1} \! \bfz_t) \! + \! A^{-1} \! G \bfu_t \notag \\
    &- H^\top V^{-1} S^\top A^{-1} G \bfu_t - A^{-1} S V^{-1} \! \bfz_t + H^\top B^{-1} \! \bfz_t. \label{eq:info_mean_1}
\end{align}
To proceed, we need to use the Woodbury identity \cite{petersen2008matrix}:
\begin{align}
    &(A \! + \! CBC^\top)^{-1} \! = \! A^{-1} \! - \! A^{-1} \! C (B^{-1} \! + \! C^\top \! A^{-1} \! C)^{-1} \! C^\top \! A^{-1}, \label{eq:woodbury_1} \\
    &(P^{-1} \! + \! B^\top \! R^{-1} \! B)^{-1} B^\top \! R^{-1} \! = \!  P B^\top (B P B^\top \! + \! R)^{-1}, \label{eq:woodbury_2}
\end{align}
where $P$ and $R$ should be positive definite. 
Focusing on the covariance first, i.e. \eqref{eq:cov_inv_1}, and using \eqref{eq:woodbury_1}, we have
\begin{equation} \label{eq:woodbury_app_1}
    \begin{aligned}
    &A^{-1} - A^{-1} F (\Sigma_{t-1}^{-1} + F^\top A^{-1} F)^{-1} F^\top A^{-1} \\
    &= (A + F \Sigma_{t-1} F^\top)^{-1} \\
    &= (W \! - \! S  V^{-1} \! S^\top \! + \! F  \Sigma_{t-1} \! F^\top)^{-1} \!
    = \! (\Sigma_t^+ \! - \! S V^{-1} S^\top)^{-1},
    \end{aligned}
\end{equation}
where $\Sigma^+_{t}$ is the predicted covariance. To shorten the equations, we define the following identitie:
\begin{equation}
    D = (\Sigma^+_t - S V^{-1} S^\top)^{-1}, \ E = (V - S^\top \Sigma^+_t S)^{-1}.
\end{equation}
%
%
%
%
This way, we get
\begin{align}
    \Sigma_t^{-1} =& D + H^\top V^{-1} S^\top D S V^{-1} H + H^\top V^{-1} H \notag \\
    &-H^\top V^{-1} S^\top D - D S V^{-1} H.
    \label{eq:cov_inv_2}
\end{align}
Using \eqref{eq:woodbury_1} and \eqref{eq:woodbury_2}, we get 
\begin{align}
    &E \! = \! V^{-1} \! + \! V^{-1} \! S^\top \! D S V^{-1}\!\!, \ D \! = \! \Sigma_t^{+ -1} \! + \! \Sigma_t^{+ -1} \! S E S^\top \! \Sigma_t^{+ -1}   \\
    &V^{-1} S^\top D = - E S^\top \Sigma^{+-1}_t, \ D S V^{-1} = - \Sigma_t^{+ -1} S E ,
\end{align}
and \eqref{eq:cov_inv_2} becomes 
\begin{align}
    \Sigma_t^{-1} =& D + H^\top E H + H^\top E S^\top \Sigma_t^{+ -1} + \Sigma_t^{+ -1} S E H \notag \\
    =& \Sigma_t^{+ -1}  +  \Sigma_t^{+ -1}  S E S^\top \Sigma_t^{+ -1} + H^\top E H \notag \\
    &+ H^\top E S^\top \Sigma_t^{+ -1} + \Sigma_t^{+ -1} S E H \notag \\
    =& \Sigma_t^{+ -1} + (H^\top + \Sigma^{+-1}_t S) E (H + S^\top \Sigma^{+ -1}_t).
    \label{eq:cov_inv_3}
\end{align}
Letting $J = H + S^\top \Sigma^{+ -1}_t$ and taking the inverse of the above equation by using \eqref{eq:woodbury_1}, we get
\begin{align}
    \Sigma_t =& \Sigma^+_t - \Sigma^+_t J^\top (J \Sigma^+_t J^\top + E^{-1})^{-1} J \Sigma^+_t \notag \\
    =& \Sigma^+_t - (\Sigma^+_t H^\top + S) (H \Sigma^+_t H^\top + V + HS + S^\top H^\top)^{-1} \notag \\
    & (H\Sigma^+_t + S^\top).
\end{align}
Defining the Kalman gain as
\begin{equation}\label{eq:corr_Kalman_gain}
    K_t = (\Sigma^+_t H^\top + S) (H \Sigma^+_t H^\top + V + HS + S^\top H^\top)^{-1},
\end{equation}
the above equation can be rewritten as 
\begin{equation}
    \Sigma_t = (I - K_t H) \Sigma_t^+ - K_t S^\top,
\end{equation}
%
Then, we show that $\boldsymbol{\mu}_t$ in \eqref{eq:info_mean_1} is equivalent to the following:
\begin{equation}
    \boldsymbol{\mu}_t = \boldsymbol{\mu}_t^+ + K_t (\bfz_t - H \boldsymbol{\mu}^+_t).
    \label{eq:corr_KF_mean_update}
\end{equation}
\eqref{eq:woodbury_2} tells us that
\begin{align}
    &A^{-1} F (\Sigma_{t-1}^{-1} + F^\top A^{-1} F)^{-1} \Sigma_{t-1}^{-1} \boldsymbol{\mu}_t \notag \\
    &= (A + F \Sigma_{t-1} F^\top)^{-1} F \boldsymbol{\mu}_t \notag \\
    &= (W - SV^{-1}S^\top + F \Sigma_{t-1} F^\top)^{-1}   F \boldsymbol{\mu}_t \notag \\
    &= (\Sigma^+_t - S V^{-1} S^\top)^{-1} F \boldsymbol{\mu}_t = D F \boldsymbol{\mu}_t.
    \label{eq:woodbury_app_2}
\end{align}
With \eqref{eq:woodbury_app_1} and \eqref{eq:woodbury_app_2}, we can simplify \eqref{eq:info_mean_1} as
\begin{align}
    \Sigma_t^{-1} \boldsymbol{\mu}_t &= (I - H^\top V^{-1} S^\top) (\Sigma_t^{+} - S V^{-1} S^\top)^{-1} \notag \\
    &\quad\; (F \boldsymbol{\mu}_{t-1} + G\bfu_t - S V^{-1} \bfz_t) + H^\top V^{-1} \bfz_t \notag \\
    &= (I \! - \! H^\top \! V^{-1} \! S^\top) \! D (\boldsymbol{\mu}^+_{t} \! - \! S V^{-1} \! \bfz_t) \! + \! H^\top \! V^{-1} \bfz_t. \label{eq:info_mean_2}
\end{align}
Then, our goal is to show that $\boldsymbol{\mu}_t$ in \eqref{eq:corr_KF_mean_update} and \eqref{eq:info_mean_2} are the same, i.e. the above equation can be written as:
\begin{equation}\label{eq:mean_update_relation}
    \Sigma_t^{-1} \boldsymbol{\mu}_t = \Sigma_t^{-1}  \boldsymbol{\mu}_t^{+} + \Sigma_t^{-1} K_t (\bfz_t - H \boldsymbol{\mu}^+_t) 
\end{equation}
The Kalman gain can be rewritten by using \eqref{eq:woodbury_1} and \eqref{eq:woodbury_2} as
\begin{align}
    K_t &= (\Sigma^+_t H^\top + S) (H \Sigma^+_t H^\top + V + HS + S^\top H^\top)^{-1} \notag \\
    &= \Sigma^+_t J^\top (J \Sigma_t^+ J^\top + E^{-1})^{-1} \notag \\
    &= (\Sigma_t^{+ -1} + J^\top E J)^{-1} J^\top E \notag \\
    &= \Sigma_t J^\top E.
\end{align}
Therefore,
\begin{align}
    &\Sigma^{-1}_t K_t (\bfz_t - H\boldsymbol{\mu}_t) \notag \\
    =& J^\top E \bfz_t - J^\top E H \boldsymbol{\mu}_t \notag \\
    =& H^\top \! E  \bfz_t \! + \! \Sigma_t^{+ -1} \! S E \bfz_t \! - \! H^\top \! E  H \! \boldsymbol{\mu}_t \! - \! \Sigma_t^{+ -1} \! S E H \boldsymbol{\mu}_t.
\end{align}
Substituting the first line of \eqref{eq:cov_inv_3} into the right hand side of \eqref{eq:mean_update_relation}, we get
\begin{align}
    &\Sigma_t^{-1}  \boldsymbol{\mu}_t^{+} + \Sigma_t^{-1} K_t (\bfz_t - H \boldsymbol{\mu}^+_t) \notag \\
    =& D \boldsymbol{\mu}_t + H^\top E H \boldsymbol{\mu}_t + H^\top E S^\top \Sigma_t^{+ -1} \boldsymbol{\mu}_t + \Sigma_t^{+ -1} S E H \boldsymbol{\mu}_t \notag \\
    &+ H^\top \! E  \bfz_t \! + \! \Sigma_t^{+ -1} \! S E \bfz_t \! - \! H^\top \! E  H \! \boldsymbol{\mu}_t \! - \! \Sigma_t^{+ -1} \! S E H \boldsymbol{\mu}_t \notag \\
    =& D \boldsymbol{\mu}_t + H^\top E S^\top \Sigma_t^{+ -1} \boldsymbol{\mu}_t + H^\top \! E  \bfz_t  +  \Sigma_t^{+ -1}  S E \bfz_t \notag \\
    =& D \boldsymbol{\mu}_t - H^\top V^{-1} S^\top D \boldsymbol{\mu}_t \notag \\
    &+ H^\top V^{-1} \bfz_t + H^\top V^{-1} S^\top D S V^{-1} \bfz_t -DSV^{-1} \bfz_t \notag \\
    =& (I \! - \! H^\top \! V^{-1} \! S^\top)  D (\boldsymbol{\mu}^+_{t} \! - \! S V^{-1} \! \bfz_t) + H^\top \! V^{-1} \bfz_t,
\end{align}
which aligns with \eqref{eq:info_mean_1}, thus completing the proof.

In summary, given estimation from the previous timestep $\boldsymbol{\mu}_{t-1}, \Sigma_{t-1}$, control input $\bfu_t$, and observation $\bfz_t$ at time $t$, the Kalman filter with correlated motion and observation noise is shown below.

\noindent Prediction:
\begin{align}
    \bfmu_{t}^+ &= F \bfmu_{t-1} + G\bfu_t,\\
    \Sigma_{t}^+ &= F \Sigma_{t-1} F^\top + W.
\end{align}

\noindent Update:
\begin{align}
    K_t &= (\Sigma^+_t H^\top \! + \! S) (H \Sigma^+_t H^\top \! + \! V \! + \! HS \! + \! S^\top H^\top)^{-1}, \\
    \bfmu_{t} &= \bfmu_{t}^+ + K_{t} (\bfz_{t} - H \bfmu_{t}^+),\\
    \Sigma_{t} &= (I - \bar{K}_{t} H) \Sigma_{t}^+ - K_t S^\top,
\end{align}
which aligns with \cite[Table~5.1]{crassidis2004optimal}.
\end{document}